\newtheorem{theorem}{Theorem}
\newtheorem{lemma}{Lemma}
\newtheorem{proposition}{Proposition}
\newtheorem{remark}{Remark}
\newtheorem{proof}{Proof}
\providecommand{\nor}[1]{\left\lVert {#1} \right\rVert}
\providecommand{\scalT}[2]{\left\langle{#1},{#2}\right\rangle}
\def\argmax{\operatornamewithlimits{arg\,max}}
\def\argmin{\operatornamewithlimits{arg\,min}}
\ifcvprfinal\pagestyle{empty}\fi
\title{ Asymmetric Regularized CCA and Hierarchical Kernel Sentence Embedding for Image \& Text Retrieval   }
\author{Youssef Mroueh, Etienne Marcheret, Vaibhava Goel\\
IBM Watson Multimodal Group\\
{\tt\small mroueh,etiennem,vgoel@us.ibm.com}
% For a paper whose authors are all at the same institution,
% omit the following lines up until the closing ``}''.
% Additional authors and addresses can be added with ``\and'',
% just like the second author.
% To save space, use either the email address or home page, not both
%\and
%Second Author\\
%Institution2\\
%First line of institution2 address\\
%{\tt\small secondauthor@i2.org}
}
\begin{document}
% \nipsfinalcopy is no longer used

\maketitle

\begin{abstract} 
Joint modeling of language and vision has been drawing increasing
interest. A multimodal data representation allowing for
bidirectional retrieval of images by sentences and vice versa is a key
aspect. In this paper we present three contributions in  canonical correlation analysis (CCA) based multimodal retrieval. Firstly, we show that an asymmetric weighting of the canonical weights, while achieving a cross-view mapping from the search to the query space, improves the retrieval performance.  Secondly, we devise a computationally efficient model selection, crucial to generalization and stability, in the framework of the  \emph{Bj{\"o}rk Golub} algorithm for  regularized CCA  via spectral filtering.  Finally, we introduce a  Hierarchical Kernel Sentence Embedding (HKSE) that approximates Kernel CCA for a special similarity kernel  between distribution of words embedded in a vector space. State of the art results are obtained on MSCOCO and Flickr benchmarks when these three techniques are used in conjunction. 
%In this paper we show that a cross-view  mapping of the search space to the query space achieves state of the art performance in bidirectional retrieval using off the shelf features.   We show that this search to query mapping  can be implemented by a simple asymmetric weighting of the canonical correlation weights (AW-CCA), where the canonical weights of the search space are weighted by the canonical correlations. We revisit regularization for Canonical Correlation Analysis (CCA), devising a fast cross validation for CCA, within the framework of spectral filtering ( Tikhonov and truncated SVD (T-SVD) regularization) and  the Bjorck Golub algorithm. Finally we propose the randomized kernel mean embedding features as a means for aggregation of local features in text and image descriptions. 
%We present our regularized CCA algorithms within two spectral filtering regularization families:  Tikhonov regularization  and truncated SVD (T-SVD) regularization . T-SVD regularization is new to the best of our knowledge in the context of CCA and is faster to cross-validate than Tikhonov regularized CCA at the price of a small loss in accuracy.
%and  devise a fast method to choose the regularization parameter in the Tikhonov regularized CCA, using the regularization path of  T-SVD CCA and the singular values of the data matrices.   
\end{abstract} 
%\vspace{-0.35in} 

\section{Introduction: Multimodal Retrieval}
Modeling jointly language and vision has attracted a lot of attention recently. Generative models such as deep recurrent networks for the language modeling, in conjunction with deep convolutional neural networks on the image side have shown remarkable success in the task of image captioning \cite{Karpathy,mao2014explain,vinyals2014show}. Image and Text retrieval has been the focus of many recent works
\cite{MSR,Klein,skipthoughts,Gong}. The main contributions of this paper are :% Given an image and sentence embedding the goal is to design a multimodal representation that captures the correlation between the two modalities, and allows for bidirectional search.

%\vspace{-0.005 in}
%\subsection{This paper: Contributions}
{ 1) \emph{Mapping search items to the query space, AW-CCA.}} In multimodal retrieval it is more common to embed the search space and the query space 
into a shared space (CCA in \cite{Klein}) or to map the query item to the search space (cross-view mapping in \cite{Socher_zero}). In this paper we empirically show that  mapping the search items to the query space outperforms those standard methods. We further show that this cross-view mapping from the search to the query space can be implemented by a simple asymmetric weighting of the canonical correlation weights (AW-CCA), where the canonical weights of the search space are weighted by the canonical correlations.
% Intuitively, keeping the query intact and mapping the search items to the query space reduces the noise introduced on the query of interest. 
%2) \textbf{\emph{Asymmetric  and task dependent Correlation Weighting for CCA.}} 

{ 2.a) \emph{Regularization and Spectral Filtering for CCA.} }Regularization is a key factor for the numerical stability as well as the generalization properties of a learning algorithm. We revisit Regularized CCA \cite{RegularizedCCA} within the framework of spectral filtering and the  Bj{\"o}rk Golub Algorithm \cite{Golub73numericalmethods} (Algorithm \ref{ALG:BjorckGolub} in Appendix \ref{ap:algorithms}). We present our regularized CCA within two spectral filtering regularization families:  Tikhonov regularization \cite{RegularizedCCA}  and truncated SVD (T-SVD) regularization \cite{T-SVD}. T-SVD regularization is new in the CCA context.% While Tikhonov regularization is more popular for regularized CCA \cite{RegularizedCCA}, we introduce regularization for CCA using truncated SVD covariances. 
 
%Truncated SVD (T-SVD) is a popular regularizer for least squares regression problems \cite{T-SVD}, but to our knowledge has not been utilized before in the context of CCA .
 { 2.b) \emph{Fast T-SVD guided Tikhonov cross-validation for CCA.}}  We show that the truncated SVD CCA cross-validation can be computed more efficiently than the Tikhonov regularization path, at the price of a small loss in accuracy. 
In light of the spectral filtering interpretation we propose a hybrid algorithm that takes advantage of the fast computation of the regularization path of T-SVD CCA, in choosing a regularization parameter for the Tikhonov counterpart (Algorithm \ref{ALG:GTikhonovcca} in Appendix \ref{ap:algorithms}), enabling a computationally lightweight exhaustive model selection, thanks to this hybrid strategy.

{3) \emph{Hierarchical Kernel Sentence Embedding and State of the art results.}} We propose the Hierarchical Kernel Sentence Embedding (HKSE) as a means for aggregation of words embeddings (word2vec), that explains and outperforms the mean word2vec baseline . Using those features and mapping the search items to the query space via the asymmetric weighting of the Regularized CCA (T-SVD guided Tikhonov) and the cosine similarity, we achieve  state of the art bidirectional retrieval results  on the MSCOCO  \cite{MSCOCO} and Flickr benchmarks \cite{Flickr8k,Flickr30k} , with off the shelf features for image and text descriptions.\\
\noindent \textbf{Notation.}  $\mathcal{Q}$ and $\mathcal{S}$ are query and search spaces. Given a multimodal training set $S=\{(x_i,y_i)| x_i \in \mathcal{X}\subset \mathbb{R}^{m_x}, y_i \in \mathcal{Y}\subset \mathbb{R}^{m_y}, i=1\dots n\}$, ($n>\max(m_x,m_y)$), let $X \in \mathbb{R}^{n\times m_x}$, and $Y\in \mathbb{R}^{n\times m_y}$ be the two data matrices corresponding to each modality. Define $\mu_{X},\mu_{Y}$ to be the means of $X$ and $Y$ respectively. Let $C_{XX}= X^{\top}X-\mu_{X}\mu_{X}^{\top} \in \mathbb{R}^{m_x\times m_x}$, and $C_{YY}= Y^{\top}Y-\mu_{Y}\mu_{Y}^{\top} \in \mathbb{R}^{m_y\times m_y}$ be the covariances matrices of $X$ and $Y$ respectively. Let $C_{XY}=X^{\top}Y-\mu_{X}\mu_{Y}^{\top} \in \mathbb{R}^{m_x\times m_y}$ be the correlation matrix. Define $I_k$ to be the identity matrix in $k$ dimensions. SVD stands for the \emph{thin} singular value decomposition. A validation set $S_{v}$ is given for model selection. Our goal is to index a test set $S^*=\{(x^*_i,y^*_i)|  x^*_i \in S^*_{x} \subset \mathcal{X} , y^*_i \in S^*_{y} \subset \mathcal{Y}, i=1\dots n^*\}$ for bidirectional search. $X$ and $Y$ are assumed to be centered.  For $X$ non singular, let $X= U\Sigma V^{\top}$, and $C_{XX}=X^{\top}X$, we define $C^{-\frac{1}{2}}_{XX}=V\Sigma^{-1}$. $\sigma_{x,1}\dots \sigma_{x,m_x}$ are singular values in decreasing order.

\section{CCA and Image/Captions Bidirectional Retrieval }\label{sec:cca}
\noindent \textbf{Bidirectional Retrieval.} We start by defining more formally the bidirectional retrieval tasks. Given pairs of high dimensional points $(x_i,y_i)\in \mathcal{X}\times \mathcal{Y}$ where $x_i$ corresponds to the feature representation of an image given by a deep convolutional neural network, and $y_i$ a sentence embedding of an associated caption. Our goal is to index this multimodal data in a way that enables bidirectional retrieval: the image annotation task associating  a caption to a query  image and the image search task  associating an image to a query caption.\\

\noindent \textbf{Canonical Correlation Analysis.} We review in this section Canonical Correlation Analysis  due to \cite{hotelling1936relations}. For data matrices $X \in \mathbb{R}^{n\times m_x}$ and  $Y \in \mathbb{R}^{n\times m_y}$, let $k=\min(m_x,m_y)$ the canonical correlations $\sigma_1,\dots \sigma_k$ , and their corresponding pairs of correlations weights $\{(u_i,v_i)\}_{i=1\dots k}$, are given by the columns of $U\in \mathbb{R}^{m_x \times k}$ and $V \in \mathbb{R}^{m_y\times k}$, where $U$ and $V$ are the solution of the following maximization problem: 
$\max_{U^{\top}C_{XX}U=I_k , V^{\top}C_{YY}V=I_k} Tr(U^{\top}C_{XY}V),$
where $\sigma_i=u_i^{\top}C_{XY}v_i, i=1\dots k.$ We note $\Sigma$ the diagonal matrix having $\sigma_i$ on its diagonal. Intuitively CCA finds the directions that are maximally correlated and that are orthonormal in the metric defined by each covariance matrix, respectively. Bj{\"o}rk and Golub showed that the CCA problem is equivalent to the following formulation, minimizing the square of the Frobenius norm of the embeddings in the shared space:
\begin{equation}
\min_{U^{\top}C_{XX}U=I_k , V^{\top}C_{YY}V=I_k} \nor{XU-YV}^{2}_{F}.
\label{eq:ccajoint}
\end{equation}

\noindent \textbf{Cross-View Maps.} Using the linear maps $(U,V)$ from CCA, cross-view mapping of the multimodal data  is defined as mapping the first modality to  the second modality represented in the shared space. Interestingly we will see that those problems have a simple closed form solution using CCA. First for mapping images to captions embedded  in the shared space we solve:
\begin{equation}
\min_{W_x\in \mathbb{R}^{m_x\times k}}\nor{XW_x-YV}^2_{F} 
\label{eq:ccaTask1}
\end{equation}
the minimum is achieved by (assuming $X$ is non singular): 
\begin{eqnarray*}
W_x&=& (X^{\top}X)^{-1}X^{\top}YV
= (X^{\top}X)^{-1} (X^{\top}X)U\Sigma\\
&=& U \Sigma,
\end{eqnarray*}
where we used the property that CCA satisfies  as a generalized eigenvalue problem \cite{hotelling1936relations} i.e $X^{\top}Y V =(X^{\top}X) U\Sigma$.\\
Second for mapping captions to the images embedded in the shared space we solve:
\begin{equation}
\min_{W_y\in \mathbb{R}^{m_y\times k}}\nor{YW_y-XU}^2_{F} 
\label{eq:ccaTask2}
\end{equation}
similarly the minimum is achieved by $W_y= V\Sigma$. 

Those properties of cross-view mappings appeared for the first time in \cite{Golub1995}, for solving CCA in an alternating minimization fashion, we show in the following, that those properties are at the core of the bidirectional retrieval.
Interestingly we see from solutions of Equations \eqref{eq:ccaTask1} and \eqref{eq:ccaTask2}, that cross view maps between modalities and the shared space have a very simple form, suggesting embeddings having an asymmetric weighting of the canonical weights by the canonical correlations $(\Sigma U^{\top},V^{\top})$ and $(U^{\top},\Sigma V^{\top})$.
We answer in this Section the following questions: 
\emph{For a particular task which embedding should we use, the shared space embeddings $(U^{\top},V^{\top})$, or cross view mappings induced the by the asymmetric weighting? 
Given query and search spaces should the asymmetric weighting  be on the query space or  the search space canonical weight?}

We find empirically as discussed in Section \ref{sec:exp} that asymmetric weighting should be on the search space, and not on the query space canonical weights. Moreover this particular asymmetric weighting outperforms unweighted CCA, as well as a symmetric weighting of CCA introduced in \cite{Gong}. We give here an explanation of this empirical observation.\\

\noindent \textbf{Shared Space Versus Cross-view Maps.} To answer the first question, it is clear from the CCA formulation in Equation \eqref{eq:ccajoint}, that $(U,V)$ are optimized jointly for both retrieval tasks, hence embeddings $(U^{\top},V^{\top})$ are not optimized to their best for each task aside. On the other hand cross view maps in Equations \eqref{eq:ccaTask1} and \eqref{eq:ccaTask2} benefit from the shared space, and optimize for each task aside, hence are superior in bidirectional retrieval.% as we will confirm latter experimentally \textcolor{blue}{ (See Table \ref{tab:mappings} and Appendix \ref{app:Opt}).}

\noindent \textbf{Query Generation Versus  Search Generation.} To answer the second question, this goes back to a fundamental problem in Information Retrieval.
Given a query  $q \in \mathcal{Q}$ and a search item $s \in \mathcal{S}$ (referred to usually as a document), two probabilistic retrieval approaches are possible \cite{Relevance}: 1) The Search Generation approach, modeling $\mathbb{P}(s|q)$ (\emph{how likely is a search item  given the query}) known as the traditional probabilistic approach  2)  The query generation approach, modeling $\mathbb{P}(q|s)$ \emph{how likely is a query item given a search item}), known also as the language modeling approach. Lafferty et al \cite{Relevance} showed that while  both approaches are equivalent probabilistically as they are based on a different parametrization of a same joint \emph{relevance} likelihood, they are different statistically as the models are estimated differently. 
%\cite{Relevance} showed that, whereas the Query Generation approach implicitly models  the \emph{relevance} between the query and the search item, the Search generation approach needs an explicit modeling of the relevance by means of positive and negative examples (See Appendix \ref{ap:Laff}). The implicit relevance modeling makes the query generation approach appealing and thus widely used in language modeling.
 \cite{Relevance} propose to use a binary random variable $r$ that denotes relevance between a query and a search item, $r=1$ if there is a match and $0$ otherwise. In order to rank search items  \cite{Relevance} propose the  use  of the log-odds ratio:
 $\log \frac{\mathbb{P}(r=1|q,s)}{\mathbb{P}(r=0|q,s)}.$
Using the search generation approach this ratio is equivalent to \cite{Relevance}:
$\log \frac{\mathbb{P}(s|q,r=1)}{\mathbb{P}(s|q,r=0)},$
hence this approach needs both positives and negatives pairs of search and query items (triplet losses are instances of this approach).
On the other hand using the query generation approach (under mild assumptions) this ratio is equivalent to \cite{Relevance}:
$\log \mathbb{P}(q|s,r=1).$

As shown in {\cite{Relevance}} the query generation approach models relevance in an implicit way and does not need negative samples. %Note $U$ Adopting the query generation approach we obtain, 
The implicit relevance modeling makes the query generation approach appealing and we adopt it in the rest of this paper.
Consider for instance the image search task (text query, image search space) considering a gaussian model in the query generation approach: $q=V^{\top}y \sim \mathcal{N}(W_x^{\top} x,\sigma^2 I_{k}) , \sigma >0,s =x.$
Maximizing the log likelihood of this query generation approach corresponds to Equation \eqref{eq:ccaTask1} and  the optimal $W_x$ is given by $U\Sigma$. 
Hence for the image search task we can use the embeddings $(\Sigma U^{\top},V^{\top})$,  and the image search problem reduces to finding for a query caption $y^* \in S^*_{y}$, an image $x^*\in S^*_x$ using $\ell_2$ or cosine similarity, i.e $\argmin_{x^* \in S^*_{x}}  \nor{\Sigma U^{\top}x^*-V^{\top}y^*}^2_{2} $ or,
\vspace{-0.05 in}
\begin{equation}
 \argmax_{x^* \in S^*_{x}} \frac{\scalT{\Sigma U^{\top}x^*}{ V^{\top} y^*}}{\nor{\Sigma U^{\top}x^*}_2\nor{V^{\top} y^*}_2}.
\label{eq:imSearchcca1}
\vspace{-0.1 in}
\end{equation} 

Similarly for the image annotation problem we can use the embeddings $(U^{\top},\Sigma V^{\top})$, and it reduces  also to finding for a query image $x^* \in S^*_{x}$, and a caption $y^* \in S^*_{y}$ using $\ell_2$ or cosine similarity, i.e $\argmin_{y^* \in S^*_{y}} \nor{U^{\top}x^*-\Sigma V^{\top}y^*}^2_2$ or,
\vspace{-0.05in}
\begin{equation}
 \argmax_{y^* \in S^*_{y}} \frac{\scalT{ U^{\top}x^*}{ \Sigma V^{\top} y^*}}{\nor{ U^{\top}x^*}_2\nor{\Sigma V^{\top} y^*}_2}.
\label{eq:imAnnotationcca1}
\vspace{-0.1 in}
\end{equation} 
Thus using the query generation approach we see that $\Sigma$ appears in an asymmetric way in the embedding of the points depending on the task: the canonical weight of the search space is weighted by the canonical correlations. Hence we call our method asymmetrically weighted CCA (Table \ref{tab:asymW})%. Asymmetrically weighted CCA (Table \ref{tab:asymW}) performs approximately\footnote{Using thin SVD $P_{y}$ is not a complete  orthonormal basis  of $\mathbb{R}^{m_{y}}.$} the optimal nearest neighbor search in the least square sense for each task. %\subsection{ Asymmetric Weighting: Important particular powers choices}
%\vspace{-0.05in}

\begin{table}[h]
%\hspace{-0.5in}
\label{sample-table}
\begin{center}
\begin{tabular}{lll}
\multicolumn{1}{c}{\bf Task}  &\multicolumn{1}{c}{\bf Image Embedding} &\multicolumn{1}{c}{\bf Caption Embedding}
\\ \hline \\
Search       &$\Sigma U^{\top}x^*$&$ V^{\top}y^*$ \\
 Annotation          &$U^{\top}x^*$ &$\Sigma V^{\top}y^*$
\end{tabular}
\end{center}
\caption{ Asymmetrically Weighted CCA: Task dependent embeddings, in the query generation approach. $x^*$ is a test image, $y^*$ is a test caption. $(U,V)$ are the canonical weights of $X$ and $Y$. $\Sigma$ is the diagonal  canonical correlations matrix.}
\label{tab:asymW}
\vspace{-0.4 in}
\end{table}
\begin{remark}
1) We found in practice that the cosine similarity between the embeddings outperforms $\ell_2$ based retrieval. This is in line with findings in the CCA based retrieval where cosine similarities are used  \cite{Klein}, \cite{Gong} and reported to be outperforming $\ell_2$. 2) Intuitively, in the query generation approach, the canonical correlations weigh the search canonical weights directions. This weighting will favor search canonical directions that are highly correlated with the query. Hence every search item is strengthened and asked to match the query. In the search generation approach, the canonical correlations weigh the query canonical weights directions. The query will get strengthened alone and asked to match all search items, which hints to why this approach is weaker : distributing strength on all search items (query generation approach), is better than stressing it on the query alone (Search generation approach). 
\end{remark}

%\begin{table}[h]
%%\hspace{-0.0005in}
%\label{sample-table}
%\begin{center}
%\begin{tabular}{lll}
%\multicolumn{1}{c}{\bf Mapping}  &\multicolumn{1}{c}{\bf  Search R@10} &\multicolumn{1}{c}{\bf Annotation R@10}
%\\ \hline \\
%Search Generation     & ~~~~34.61~~~&  ~~~~35.74\\
%CCA & ~~~~38.63 & ~~~~47.16\\
%Query Generation \\ (this paper)&\textbf{~~~~44.24} & \textbf{~~~~53.12}
%\end{tabular}
%\end{center}
%\caption{Various mappings performance in bidirectional retrieval on the MSCOCO Benchmark($5 K$ test,VGG+Mean word2vec) (in \%).}
%\label{tab:mappings}
%\vspace{-0.23 in}
%\end{table}

\section{CCA Computation and Regularization}
\noindent In this section we focus on the computational  aspects of CCA and its regularization. CCA can be solved as a generalized eigenvalue problem or using singular value decomposition by virtue of the Bj{\"o}rk Golub algoritthm.  As we will see in this Section solving using SVD offers computational advantages, and allows for regularization using spectral filtering techniques. \\  
\noindent \textbf{Bj{\"o}rk Golub Algorithm.} The following Lemma due to Bj{\"o}rk and Golub shows that the canonical correlation weights can be computed  using  SVD: %the singular value decomposition of the data matrices $X$ and $Y$, and the correlation matrix in the whitened space :
\vspace{-0.05 in}
\begin{lemma}[\cite{Golub73numericalmethods}]
Let $X=U_x\Sigma_xV^{\top}_x$, and $Y=U_y\Sigma_yV^{\top}_y$ be the singular value decomposition of $X$ and $Y$ $(U_x \in \mathbb{R}^{n\times m_x},\Sigma_x\in \mathbb{R}^{m_x\times m_x},V_x\in \mathbb{R}^{m_x\times m_x})$. Let  $k=\min(m_x,m_y)$, and $T=U^{\top}_xU_y$. Let
%\begin{equation}
 $T=P_{x}\Sigma P^{\top}_y $ 
 %\label{eq:T-SVD}
 %\end{equation}
  be its SVD,  $P_x \in \mathbb{R}^{m_x\times k}, \Sigma \in \mathbb{R}^{k\times k}, P_{y} \in \mathbb{R}^{m_y\times k}$. The canonical correlations  of $X$ and $Y$ are the diagonal elements of $\Sigma$, with canonical weights of $X$ given by $U=V_{x}\Sigma^{-1}_{x}P_{x}$, and the canonical weights of $\,Y$ given by $V= V_{y}\Sigma^{-1}_{y}P_{y}$ . (Proof  in  Appendix \ref{ap:proofs}).
\label{lem:BG} 
\vspace{-0.05 in}
\end{lemma}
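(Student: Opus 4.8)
The plan is to reduce the constrained trace maximization that defines CCA to an ordinary singular value decomposition and then read off the claimed weights. First I would rewrite the covariance matrices in terms of the given thin SVDs. Since $X$ and $Y$ are centered and of full column rank (the assumed non-singularity), we have $U_x^\top U_x = I_{m_x}$ and $U_y^\top U_y = I_{m_y}$, so $C_{XX} = X^\top X = V_x \Sigma_x^2 V_x^\top$, $C_{YY} = V_y \Sigma_y^2 V_y^\top$, and $C_{XY} = X^\top Y = V_x \Sigma_x T \Sigma_y V_y^\top$ with $T = U_x^\top U_y$. In particular the symmetric square roots are $C_{XX}^{1/2} = V_x \Sigma_x V_x^\top$ and $C_{YY}^{1/2} = V_y \Sigma_y V_y^\top$, both invertible, and $C_{XX}^{-1/2} = V_x \Sigma_x^{-1} V_x^\top$, $C_{YY}^{-1/2} = V_y \Sigma_y^{-1} V_y^\top$.

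Next I would apply the whitening reparametrization $A = C_{XX}^{1/2} U$, $B = C_{YY}^{1/2} V$. Under this change of variables the CCA constraints $U^\top C_{XX} U = I_k$ and $V^\top C_{YY} V = I_k$ become exactly $A^\top A = I_k$ and $B^\top B = I_k$, while the objective becomes $Tr(U^\top C_{XY} V) = Tr(A^\top M B)$ with $M = C_{XX}^{-1/2} C_{XY} C_{YY}^{-1/2}$. Thus the CCA problem is equivalent to maximizing $Tr(A^\top M B)$ over matrices with $k$ orthonormal columns. I would then compute $M$ explicitly: substituting the expressions above and cancelling $\Sigma_x,\Sigma_y$ yields $M = V_x T V_y^\top = (V_x P_x)\,\Sigma\,(V_y P_y)^\top$. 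Because $V_x,V_y$ are orthogonal and $P_x,P_y$ have orthonormal columns, the factors $V_x P_x$ and $V_y P_y$ again have orthonormal columns, and $\Sigma$ is diagonal with nonnegative entries; hence this is a genuine SVD of $M$, whose singular values are the diagonal of $\Sigma$.

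The remaining ingredient is that the maximum of $Tr(A^\top M B)$ over column-orthonormal $A,B$ equals the sum of the $k$ largest singular values of $M$ and is attained at $A = V_x P_x$, $B = V_y P_y$. This is the Ky Fan / von Neumann trace-maximization result, and I expect it to be the one step needing genuine care rather than direct computation: the feasibility and the value can be checked by hand, but optimality among all feasible $(A,B)$ requires this lemma. Granting it, the diagonal of $\Sigma$ is precisely the vector of canonical correlations, establishing the first claim.

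Finally I would invert the reparametrization to recover the weights: $U = C_{XX}^{-1/2} A = V_x \Sigma_x^{-1} V_x^\top (V_x P_x) = V_x \Sigma_x^{-1} P_x$, and likewise $V = V_y \Sigma_y^{-1} P_y$, matching the statement. As a sanity check one verifies feasibility and the attained value directly: $U^\top C_{XX} U = P_x^\top P_x = I_k$, $V^\top C_{YY} V = I_k$, and $U^\top C_{XY} V = P_x^\top T P_y = \Sigma$, confirming $\sigma_i = u_i^\top C_{XY} v_i$. The only real obstacle is the trace-maximization lemma; everything else is linear-algebra bookkeeping that hinges on the orthonormality of the SVD factors $U_x,U_y,V_x,V_y,P_x,P_y$.
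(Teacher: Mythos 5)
Your proof is correct and follows essentially the same route as the paper's: both whiten the CCA constraints by a change of variables, reduce the problem to maximizing $Tr(A^{\top} M B)$ over column-orthonormal matrices, solve that by an SVD (the Ky Fan/von Neumann step, which the paper invokes implicitly as ``this is solved by an SVD of $T$''), and map back to obtain $U = V_x\Sigma_x^{-1}P_x$, $V = V_y\Sigma_y^{-1}P_y$. The only difference is cosmetic: you whiten with the symmetric square root $C_{XX}^{1/2} = V_x\Sigma_x V_x^{\top}$, so your reduced matrix is $M = V_x T V_y^{\top}$ and you need one extra observation to relate its SVD to that of $T$, whereas the paper's substitution $P_x = \Sigma_x V_x^{\top}U$ lands directly on $T = U_x^{\top}U_y$.
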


Algorithm \ref{ALG:BjorckGolub} (Appendix \ref{ap:algorithms}) summarizes the Bj{\"o}rk Golub procedure to compute CCA. % While the original algorithm of Bjorck and Golub uses the QR factorization of $X$ and $Y$, we follow \cite{Avron} in exposing the algorithm fully with SVD. Note that both $U$ and $V$ correspond to a whitening step followed by a projection to a common space of dimension $k$. 
The total computational complexity of this algorithm assuming $m_y<m_x$ is $O(nm^2_x+nm_y^2+m_xm_y^2 )$. 
%While the Bjorck Golub SVD algorithm is intuitive and efficient, it is less popular in  machine learning than the generalized eigenvalue implementation of CCA.\\
 For now we have assumed that the covariances $C_{XX}$ and $C_{YY}$ are non-singular, and we presented an SVD version of the Bj{\"o}rk Golub Algorithm in this context. Regularizing CCA does not only allow for numerical stability in the non singular case, it also allows for better generalization properties and avoids overfitting. \\
%Tikhonov regularization is the most common regularization used in CCA and consists in replacing the covariances by $C_{XX}+\gamma_x I_{m_x}$ and $C_{YY}+\gamma_y I_{m_y}$, where $\gamma_{x},\gamma_y>0$ are the regularization parameters subject to cross-validation. In this section we extend the SVD Bjorck Golub Algorithm to the Tikhnov regularized CCA.
%Another  contribution of this paper is in introducing the truncated SVD regularization to the covariances in the CCA problem. We emphasize  that our truncation is applied to the SVD of the  data matrices in the covariances of  $X$ and $Y$, not the SVD of  the whitened correlation matrix $T$ in the Bjorck Golub Algorithm.  Truncating the SVD of $T$ and  choosing an embedding dimension $k<\min(m_x,m_y)$  is sometimes referred to as the truncated SVD CCA \cite{tsvdT}, hence our clarification. %We replace the covariance $C_{XX}$ by its best $k_x$-rank approximation given by the SVD of $X$, and $C_{YY}$ by its best $k_y$-rank approximation given by the SVD of $Y$, where $k_x,k_y \in \mathbb{N}$, $k_{x}\leq m_x$, and $k_{y} \leq m_y$. In this framework $k_x$ and $k_y$ are the regularization parameters and are subject to cross-validation.
%We show in the following how to specialize the Bjorck Golub algorithm to handle T-SVD regularized CCA.
\noindent \textbf{Tikhonov and Truncated SVD Regularization.}\label{sec:BGTikhonov.}
The Tikhonov regularized CCA problem \cite{RegularizedCCA} for parameters $\gamma_x,\gamma_y >0$ can be written in this form:
\begin{equation}
\max_{U^{\top}(X^{\top}X+\gamma_x I_{m_x})U=I_{k} ,V^{\top}(Y^{\top}Y+\gamma_y I_{m_y})V=I_{k} }Tr(U^{\top}X^{\top}YV).
\label{eq:rtikh}
\end{equation}
Let $k_x\leq m_x$ and let $X_{k_x}$ be the best $k_x$-rank approximation of $X$ given by the truncated SVD: 
 $X_{k_x} = U_{k_x}\Sigma_{k_x}V_{k_x}^{\top}, ~~U_{k_x} \in \mathbb{R}^{n\times k_x}, \Sigma_{k_x}\in \mathbb{R}^{k_x\times k_x}, V_{k_x}\in \mathbb{R}^{m_x\times k_x}.$ Similarly for $Y$, we define the best $k_y$-rank approximation $(k_y\leq m_y)$: $Y_{k_y} = U_{k_y}\Sigma_{k_y}V_{k_y}^{\top}, ~~U_{k_y} \in \mathbb{R}^{n\times k_y}, \Sigma_{k_y}\in \mathbb{R}^{k_y\times k_y}, V_{k_y}\in \mathbb{R}^{m_y\times k_y}$.
We define the truncated SVD CCA as follows:
\begin{equation}
\max_{U^{\top}X^{\top}_{k_x}X_{k_x}U=I_{k}, V^{\top}Y_{k_y}^{\top}Y_{k_y}V=I_{k} }Tr(U^{\top}X^{\top}YV)
\label{eq:rT-SVD}
\end{equation}
The following Theorem Proved in Appendix B shows how the Bj{\"o}rk-Golub procedure to compute the canonical weights extends to the regularized case, using singular value decompositions of the data matrices and a correlation operator.
\vskip -0.2in
\begin{theorem}[Regularized CCA] Let $X=U_x\Sigma_xV^{\top}_x$, and $Y=U_y\Sigma_yV^{\top}_y$ be the singular value decomposition of $X$ and $Y$ $(U_x \in \mathbb{R}^{n\times m_x},\Sigma_x\in \mathbb{R}^{m_x\times m_x},V_x\in \mathbb{R}^{m_x\times m_x})$. 
 \begin{enumerate}[leftmargin=0.2in] 
\item {{Tikhonov Regularization.}} Let  $k=\min(m_x,m_y)$. Define the Tikhonov regularized correlation operator $$T_{\gamma_x,\gamma_y}= \left(\Sigma^2_x+\gamma_x I\right)^{-\frac{1}{2}}\Sigma_x \left(U^{\top}_xU_y\right)  \Sigma_y  \left(\Sigma^2_y+\gamma_y I\right)^{-\frac{1}{2}}, $$
and let $T_{\gamma_x,\gamma_y}= P_x\Sigma P^{\top}_{y}$ be its singular value decomposition $(P_{x}\in \mathbb{R}^{m_{x}\times k}, \Sigma \in \mathbb{R}^{k\times k}, P_{y}\in \mathbb{R}^{m_{y}\times k})$. 
The canonical weights of the Tikhonov regularized CCA  \eqref{eq:rtikh} are given by $U =V_x  \left(\Sigma^2_x+\gamma_x I\right)^{-\frac{1}{2}}P_x $ and $V =V_y  \left(\Sigma^2_y+\gamma_y I\right)^{-\frac{1}{2}}P_y$. Canonical correlations are given by $\Sigma$.
\item {{Truncated SVD Regularization.}} Define the T-SVD regularized correlation operator 
$T_{k_x,k_y}=U_{k_x}^{\top}U_{k_y},$
and let $T_{k_x,k_y}= P_x\Sigma P^{\top}_{y}$ be its singular value decomposition. 
The canonical weights of the T-SVD regularized CCA  \eqref{eq:rT-SVD} are given by $U =V_{k_x} \Sigma_{k_x}^{-1}P_{x}$ and $V =V_{k_y} \Sigma_{k_y}^{-1}P_{y}.$  Canonical correlations are given by $\Sigma$.
\item Spectral Filtering. (Tikhonov)  Define $f_{\gamma_x}$  a spectral filter acting on the singular values of $X$, such that $f_{\gamma_x,j}= \frac{\sigma_{x,j}}{\sqrt{\sigma^2_{x,j}+\gamma_x}},j=1\dots m_x$. Similarly define $f_{\gamma_y}$. we have:
\vskip -0.2 in
\begin{equation}
T_{\gamma_x,\gamma_y}=  f_{\gamma_x}(\Sigma_x) U_x^{\top}U_{y}f_{\gamma_y}(\Sigma_y).
\label{eq:SpFilterTikhonov}
\end{equation}
(T-SVD) Define $f_{k_x}$  an element wise filter acting on the singular values of $X$, such that :
$f_{k_x}(\sigma_{x,j})=0 \text{ if } \sigma_{x,j} <\sigma_{x,k_x},$
and  $f_{k_x}(\sigma_{x,j})=1 \text{ if } \sigma_{x,j} \geq \sigma_{x,k_x}.$ Let
%\begin{equation}
$T^{k_x,k_y}_f= f_{k_x}(\Sigma_x) U_x^{\top}U_{y} f_{k_y}(\Sigma_y),$
%\label{eq:SpFilterT-SVD}
%\end{equation}
For a matrix $A$, $A(1:k_x,1:k_y)$, refers to the sub-matrix containing the first $k_x$ rows and the first $k_y$ columns of $A$. We have:
%\vskip -0.2 in
\begin{equation}
T_{k_x,k_y}= T^{k_x,k_y}_f (1:k_x,1:k_y).
\label{eq:SpFilterT-SVD}
\end{equation}
\end{enumerate}
\label{theo:Reg}
\end{theorem}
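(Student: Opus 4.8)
The plan is to establish all three parts by reducing the regularized CCA problems to the unregularized Björk--Golub setting (Lemma \ref{lem:BG}) through an appropriate change of variables, since the regularization enters only through the quadratic constraints. The key observation is that both \eqref{eq:rtikh} and \eqref{eq:rT-SVD} are exactly CCA problems in which the covariance $X^{\top}X$ is replaced by a regularized covariance. For Tikhonov, the constraint matrix is $X^{\top}X+\gamma_x I_{m_x} = V_x(\Sigma_x^2+\gamma_x I)V_x^{\top}$, which is positive definite and diagonalized by the right singular vectors $V_x$. First I would introduce the whitening substitution $\tilde U = (\Sigma_x^2+\gamma_x I)^{\frac{1}{2}}V_x^{\top}U$ (and symmetrically for $V$), so that the constraint $U^{\top}(X^{\top}X+\gamma_x I)U=I_k$ becomes the orthonormality constraint $\tilde U^{\top}\tilde U = I_k$.

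Next I would rewrite the objective $Tr(U^{\top}X^{\top}YV)$ in terms of $\tilde U,\tilde V$. Substituting $U = V_x(\Sigma_x^2+\gamma_x I)^{-\frac{1}{2}}\tilde U$ and $X^{\top}Y = V_x\Sigma_x U_x^{\top}U_y\Sigma_y V_y^{\top}$, the trace collapses to $Tr(\tilde U^{\top} T_{\gamma_x,\gamma_y}\tilde V)$ with exactly the operator $T_{\gamma_x,\gamma_y}=(\Sigma_x^2+\gamma_x I)^{-\frac{1}{2}}\Sigma_x(U_x^{\top}U_y)\Sigma_y(\Sigma_y^2+\gamma_y I)^{-\frac{1}{2}}$ defined in the statement. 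The problem is now the standard trace-maximization $\max_{\tilde U^{\top}\tilde U=\tilde V^{\top}\tilde V=I_k} Tr(\tilde U^{\top}T_{\gamma_x,\gamma_y}\tilde V)$, whose maximizers are the leading left and right singular vectors $P_x,P_y$ of $T_{\gamma_x,\gamma_y}$ (von Neumann / orthogonal Procrustes argument), and whose optimal value is $\sum_i\sigma_i$. Undoing the substitution gives $U=V_x(\Sigma_x^2+\gamma_x I)^{-\frac{1}{2}}P_x$ and $V=V_y(\Sigma_y^2+\gamma_y I)^{-\frac{1}{2}}P_y$ with canonical correlations $\Sigma$, proving Part 1. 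The spectral-filtering identity \eqref{eq:SpFilterTikhonov} is then immediate since $(\Sigma_x^2+\gamma_x I)^{-\frac{1}{2}}\Sigma_x = f_{\gamma_x}(\Sigma_x)$ entrywise, so I would simply note this factorization.

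For the T-SVD case (Part 2), the argument is structurally the same but the constraint matrices $X_{k_x}^{\top}X_{k_x}=V_{k_x}\Sigma_{k_x}^2 V_{k_x}^{\top}$ are now rank-deficient (rank $k_x<m_x$), so the whitening substitution $U=V_{k_x}\Sigma_{k_x}^{-1}\tilde U$ only fixes $U$ up to the kernel of the constraint; the constraint $U^{\top}X_{k_x}^{\top}X_{k_x}U=I_k$ determines only the component of $U$ in the range of $V_{k_x}$, and the stated weights exhibit the canonical representative in that range. After substitution the objective reduces to $Tr(\tilde U^{\top} U_{k_x}^{\top}U_{k_y}\tilde V)=Tr(\tilde U^{\top}T_{k_x,k_y}\tilde V)$, and the same singular-vector argument yields $U=V_{k_x}\Sigma_{k_x}^{-1}P_x$, $V=V_{k_y}\Sigma_{k_y}^{-1}P_y$. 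For \eqref{eq:SpFilterT-SVD} I would observe that $f_{k_x}(\Sigma_x)$ is the diagonal projector onto the top-$k_x$ singular directions, so $f_{k_x}(\Sigma_x)U_x^{\top}U_{y}f_{k_y}(\Sigma_y)$ is the full $m_x\times m_y$ matrix that is zero outside its top-left $k_x\times k_y$ block, and that nonzero block is precisely $U_{k_x}^{\top}U_{k_y}=T_{k_x,k_y}$; extracting $(1:k_x,1:k_y)$ then gives the equality.

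The main obstacle is handling the singular (T-SVD) case rigorously: because the constraint form is only positive \emph{semi}definite, the change of variables is not invertible and one must argue that restricting to the range of the truncated operator loses no optimality, i.e.\ that components of $U,V$ in the nullspace do not increase the objective $Tr(U^{\top}X^{\top}YV)$ while being forced to vanish (or be irrelevant) by the constraint. I would address this by decomposing $U$ into range and nullspace parts relative to $V_{k_x}$ and checking that the objective depends only on the range part (since $X^{\top}Y$ acting through the truncated structure annihilates the rest), which lets the reduced, full-rank Procrustes argument go through unchanged. Everything else is bookkeeping with SVD factorizations.
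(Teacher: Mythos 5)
Your Parts 1 and 3 are correct and are essentially the paper's own argument (Appendix \ref{ap:proofs}): the whitening substitution $\tilde U=(\Sigma_x^2+\gamma_x I)^{1/2}V_x^{\top}U$, $\tilde V=(\Sigma_y^2+\gamma_y I)^{1/2}V_y^{\top}V$ turns the Tikhonov constraints into orthonormality, the objective collapses to $Tr(\tilde U^{\top}T_{\gamma_x,\gamma_y}\tilde V)$, the trace maximization under orthonormality is solved by the SVD of $T_{\gamma_x,\gamma_y}$, and the spectral-filtering identities \eqref{eq:SpFilterTikhonov} and \eqref{eq:SpFilterT-SVD} are bookkeeping on the diagonal factors.

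The genuine gap is in Part 2, precisely at the step you yourself flagged as the main obstacle. Your proposed fix is to show that the objective depends only on the component of $U$ in $\mathrm{range}(V_{k_x})$, ``since $X^{\top}Y$ acting through the truncated structure annihilates the rest.'' That claim is false: the objective of \eqref{eq:rT-SVD} uses the \emph{untruncated} correlation $X^{\top}Y=V_x\Sigma_xU_x^{\top}U_y\Sigma_yV_y^{\top}$, not $X_{k_x}^{\top}Y_{k_y}$. Write $U=V_{k_x}A+\bar V_{k_x}B$, where $\bar V_{k_x}$ collects the trailing right singular vectors of $X$ (columns $k_x+1,\dots,m_x$ of $V_x$), and let $\bar U_{k_x},\bar\Sigma_{k_x}$ denote the corresponding trailing left singular vectors and singular values. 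The constraint $U^{\top}X_{k_x}^{\top}X_{k_x}U=A^{\top}\Sigma_{k_x}^2A=I_k$ leaves $B$ completely unconstrained, while the objective contains the extra term $Tr(B^{\top}\bar\Sigma_{k_x}\bar U_{k_x}^{\top}U_y\Sigma_yV_y^{\top}V)$, which is linear in $B$ with a generically nonzero coefficient whenever $\mathrm{rank}(X)>k_x$. So the nullspace component is neither forced to vanish by the constraint nor irrelevant to the objective; scaling $B$ shows that problem \eqref{eq:rT-SVD}, read literally, is unbounded, and no range/nullspace decomposition can make the check you propose come out true.

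What is actually true --- and is what the paper's proof does silently, by adopting the parametrization $U=V_{k_x}\Sigma_{k_x}^{-1}P_x$ without justifying that it loses nothing --- is that the T-SVD problem must be understood as posed over $U\in\mathrm{range}(V_{k_x})$, $V\in\mathrm{range}(V_{k_y})$, or equivalently with $X^{\top}Y$ replaced by $X_{k_x}^{\top}Y_{k_y}$ in the objective, in which case the $B$-dependent term genuinely vanishes. Under that reading your reduced, full-rank Procrustes argument goes through verbatim and recovers $U=V_{k_x}\Sigma_{k_x}^{-1}P_x$, $V=V_{k_y}\Sigma_{k_y}^{-1}P_y$ with correlations $\Sigma$. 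Without that reinterpretation, the step you planned to verify is exactly the one that fails.
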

%\begin{proof}
%The proof is given in %\end{proof}
%\begin{remark}
% ~
%\begin{enumerate}
%\item 
%1)  While $T_{k_x,k_y} \in \mathbb{R}^{k_x\times k_y}$ has a SVD computational cost of  $\min(O(k_xk_y^2) ,O(k_y k^2_x))$, $T_{\gamma_x,\gamma_y} \in \mathbb{R}^{m_x\times m_y}$ and has a SVD computational cost of  $\min(O(m_xm_y^2) ,O(m_y m^2_x))$. Hence T-SVD is more efficient computationally. 
%2) In T-SVD CCA the dimension of the embedding space is  $k=\min(k_x,k_y)$. In the Tikhonov case, it is $k=\min(m_x,m_y)$ (one can also compute a truncated SVD of $T_{\gamma_x,\gamma_y}$ to reduce further the dimensionality of the embedding).
%\end{enumerate}
%\end{remark}
\textbf{Spectral Filtering for CCA.}\label{sec:sfilter}
From Equations   \eqref{eq:SpFilterTikhonov} and \eqref{eq:SpFilterT-SVD}  we see that both regularizations are proceeding by filtering of singular values.
While T-SVD proceeds by a hard filtering, Tikhonov proceeds with a soft filtering. In order to see the correspondence between T-SVD and Tikhonov regularization it is important to consider the following choice of regularization parameters:
\emph{For a choice of $(k_x,k_y)$ in T-SVD, consider $\gamma_x= \sigma^2_{x,k_x}$and $\gamma_y=\sigma^2_{y,k_y}$ in Tikhonov Regularization.}
With this particular choice the spectral filters for  Tikhonov regularization  become:
%\vspace{-0.06 in}
%\begin{equation}
$f_{\sigma^2_{k_x}}(\sigma_{x,j})= \frac{\sigma_{x,j}}{\sqrt{\sigma_{x,j}^2+\sigma^2_{x,k_x}}}. $
%\label{eq:filterTikhonov}
%\end{equation}
%To appreciate this particular choice of $\gamma_x,\gamma_y$, it is important to compare the spectral filter of T-SVD regularization for CCA given in Equation \eqref{eq:filterT-SVD} and the spectral filter for  Tikhonov Regularization for CCA given in Equation \eqref{eq:filterTikhonov}.
Let $\alpha >0$, consider:
$1)~ g_{\text{hard}}(x)= 0 , \text{ if } 0\leq x<\alpha, \text{ and } g_{\text{hard}}(x)= 1  \text{ if } x \geq \alpha.$
$2)~ g_{\text{soft}}(x) = \frac{x}{\sqrt{x^2+\alpha^2}}.$
It is easy to see that Equations  \eqref{eq:SpFilterTikhonov} and \eqref{eq:SpFilterT-SVD}, correspond to the element-wise application of  $g_{\text{hard}}$ and $g_{\text{soft}}$ respectively for $\alpha=\sigma_{k_x}$( See Figure \ref{fig:sfilter} for $g_{\text{hard}}$ versus $g_{\text{soft}}$, for $\alpha=20$).  %We see  that both T-SVD and Tikhonov Regularization  correspond to a spectral filtering of the singular values of $X$ and $Y$. T-SVD is a hard pruning of directions corresponding to singular values less then the threshold defined by $\sigma_{k_x}$. For $\alpha=\sigma_{k_x}$ Tikhonov corresponds to a soft pruning of those directions. 
 \begin{figure}[h!]
 \vspace{-0.003 in}
  \centering
  \includegraphics[scale=0.23]{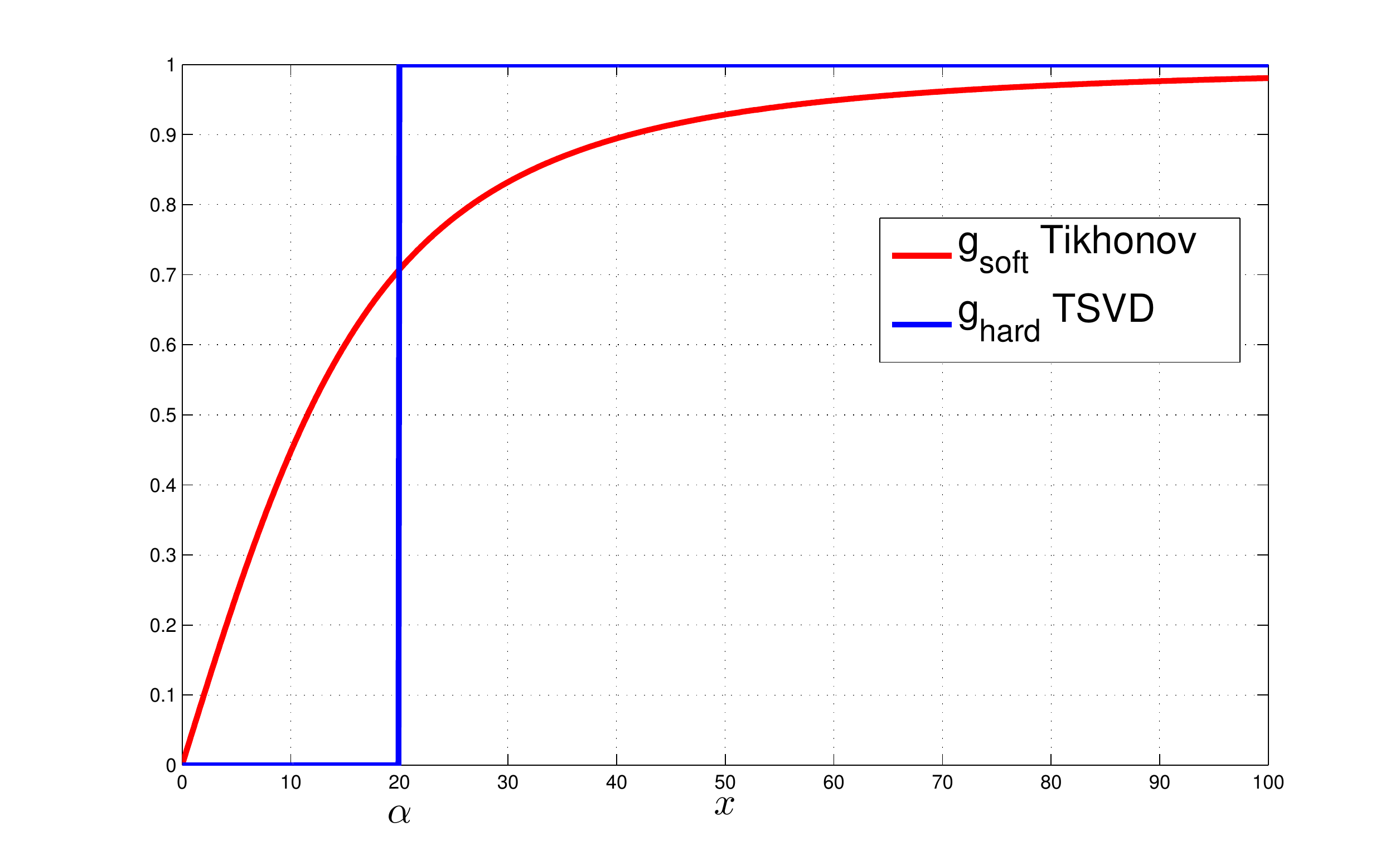}
  \caption{Both T-SVD and Tikhonov Regularization  correspond to a spectral filtering of the singular values of $X$ and $Y$. T-SVD is a hard pruning of directions corresponding to singular values less than the threshold defined by $\sigma_{k_x}$. For $\alpha=\sigma_{k_x}$ Tikhonov corresponds to a soft pruning of those directions.  }
  \label{fig:sfilter}
\end{figure}
\vspace{-0.1 in}
In conclusion, using this spectral filtering approach the natural set of regularization parameters for Tikhonov regularization $(\gamma_x,\gamma_y)$ is therefore the set of singular values squared of $X$ and $Y$ respectively. Hence in order to cross-validate CCA we propose the following three approaches:\\
1) \textbf{\emph{Tikhonov-CCA:}} Perform a grid search on $(\gamma_x,\gamma_y) \in \{\sigma^2_{1,x},\dots,\sigma^2_{m_x,x} \}\times \{\sigma^2_{1,y},\dots\sigma^2_{m_y,y} \}$, using the SVD of  $T_{\gamma_x,\gamma_y}$. Each SVD costs $\min(O(m_xm_y^2) ,O(m_y m^2_x))$ (Algorithm \ref{ALG:Tikhonovcca} in Appendix \ref{ap:algorithms}). \\
2) \textbf{\emph{T-SVD-CCA:}} Perform a grid search on $(k_x,k_y) \in [1,m_x]\times [1,m_y]$, using the SVD of   $T_{k_x,k_y}$. This grid search (Algorithm \ref{ALG:TruncatedSVDcca} in Appendix \ref{ap:algorithms}) is computationally efficient since each SVD  costs $\min(O(k_xk_y^2) ,O(k_y k^2_x))$, hence more efficient than Tikhonov. \\%(See Appendix \ref{app:Sf} Table \ref{tab:asymWTIMING} for CPU timing) .\\
3) \textbf{\emph{G-Tikhonov-CCA:}} While there is no exact one to one correspondence between the soft pruning and the hard pruning in T-SVD and Tikhonov, the spectral filtering interpretation suggests that the optimal $(k^*_x,k^*_y)$ from T-SVD cross-validation (computationally efficient), can be used as a proxy for the optimal Tikhonov cross validation (computationally expensive), by simply setting $(\gamma_x,\gamma_y)=(\sigma^2_{x,k^*_x},\sigma^2_{y,k^*_y}) $ with a small loss in the accuracy. This is summarized in Algorithm \ref{ALG:GTikhonovcca} in Appendix \ref{ap:algorithms}. Algorithm \ref{ALG:GTikhonovcca}, takes advantage of the fast computation of T-SVD regularization for CCA, and the spectral filtering interpretation in order to choose a good regularization parameter for Tikhonov Regularization.

%\begin{table}[h]
%\hspace*{-0.05 in}
%\centering
%\small\addtolength{\tabcolsep}{-3 pt}
%\begin{tabular}{{|l|llll|llll|l}}
%\hline
%& \multicolumn{4}{|c|}{Image search} & \multicolumn{4}{|c|}{Image annotation} \\
% AW-CCA& r@1 & r@5 & r@10 & med &  r@1 & r@5 & r@10 & med  \\
%& & & & rank  & & & & rank \\
%\hline
%Tikh &\textbf{23.08}&\textbf{50.62}&\textbf{63.53}&\textbf{5.0}&\textbf{30.66}&\textbf{59.1}&\textbf{70.13}&\textbf{3.0} \\                                                                  
%T-SVD  & 20.61&46.91&59.9&6.0&27.0&55.5&67.80&4.0  \\  
% G-Tikh &\textbf{22.40}&\textbf{49.94}&\textbf{62.62}&\textbf{6.0}&\textbf{30.56}&\textbf{58.43}&\textbf{69.96}&\textbf{4.0} \\  
% \hline                                                                
%\end{tabular}
%\vspace{.2cm}
%\caption{Mean results (in \%) of the test splits on the Flickr 30 K, in average w2vec/vgg setup. We see that G-Tikh and Tikh are on par.}% \textcolor{blue}{med r cv, we HAVE R@1 for full tikh not the rest}}
%\label{tab:flickrComp}
%\vspace{-0.2 in}
%\end{table} 

\section{Hierarchical Kernel Sentence Embedding}\label{sec:features}
%\vspace{0.01 in}
%\textbf{Image Features.} For feature extraction we follow \cite{Klein}, and use VGG CNN features with a linear Kernel.

\noindent \textbf{Features.} After proposing our asymmetric CCA and an efficient way to cross validate it for bidirectional retrieval of images and sentences, we address in this Section the question of feature representation for both modalities. Convolutional neural networks are the gold standard in computer vision, hence we focus here 
on defining   a new sentence  embedding for aggregating local features in text description.
 Given a vocabulary $\mathcal{A}$ represented by a vector space i.e a word embedding, word2vec for instance  \cite{word2vec}, a sentence can be seen as a distribution $\rho$ on $\mathcal{A}$.

\noindent \textbf{A Kernel between Distributions.} Given a distribution $\rho$ defined on a vocabulary space $\mathcal{A} \subset \mathbb{R}^{d}$. Let $k_{\gamma,d}$ be a shift invariant kernel such as the gaussian kernel with  parameter $\gamma$, for $a,b\in \mathcal{A}$, $k_{\gamma,d}(a,b)=\exp(-\frac{\gamma}{2}\nor{a-b}^2)$. Let $\mathcal{H}_{k}$ be the associated Reproducing Kernel Hilbert Space (RKHS), with norm $\nor{.}_{\mathcal{H}_{k}}$. The kernel mean embedding of $\rho$ is defined \cite{kmeanemb1} as follows $\mu(\rho) = \int_{\mathcal{A}}k_{\gamma,d}(a,.)\rho(a)da.$ $\mu \in \mathcal{H}_{k}$ and can be used to define a distance between two  distributions $\rho_1$ and $\rho_2$ by means of $\nor{\mu(\rho_1)-\mu(\rho_2)}^2_{\mathcal{H}_{k}}$.
Given a finite sample $\{a_1,\dots a_{n}\}$ from $\rho$ the empirical kernel mean embedding is given by:  
$\mu_{n}(\rho) = \frac{1}{n}\sum_{i=1}^n k_{\gamma,d}(a_i,.).$ 
Let $\eta>0$, define the following kernel \cite{kmeanemb1,BagWords1,UnivKernel} between  distributions  $\rho_1$ and $\rho_2$ defined on $\mathcal{A}$, given a finite sample $\{a_1\dots a_{n_1}\}$ and $\{b_1\dots b_{n_2}\}$ from each distribution respectively:\vspace{-0.06 in}
$$K(\rho_1,\rho_2)=  \exp \left( -\frac{\eta}{2}\nor{\mu_{n_1}(\rho_1)-\mu_{n_2}(\rho_2)}^2_{\mathcal{H}_{k}} \right). \vspace{-0.06 in}$$

It is easy to see that $K(\rho_1,\rho_2)=\exp \frac{\eta}{2}(\Delta)$, where $\Delta=\frac{1}{n_1n_2}\sum_{i=1}^{n_1}\sum_{j=1}^{n_2}2k_{\gamma,d}(a_i,b_j)-\frac{1}{n_1^2} \sum_{i,j=1}^{n_1}k_{\gamma,d}(a_i,a_j)-\frac{1}{n^2_2}\sum_{i,j=1}^{n_2}k_{\gamma,d}(b_i,b_j)$.\!\! A sentence with words  $a_i, i=1\dots n$ embedded in $\mathcal{A}$, can be seen as a set of finite samples from a distribution $\rho$. Hence to compare two sentences we can use the kernel $K$ defined above. Note that $K$ is the composition of a word level similarity  where a pairwise comparison (intra and inter sentences) of words is computed using a first level gaussian kernel, and a sentence level similarity on the average pairwise similarity of words using a second level gaussian kernel. \\% $K$ can be seen also as a kernel between sets or bag of words kernel \cite{kmeanemb1, BagWords1}. $K$ defines a universal kernel on distributions as shown in \cite{UnivKernel}. 

\noindent \textbf{Hierarchical Kernel Sentence  Embedding (HKSE)}. Rather then using the kernel $K$ and kernel CCA, that are computationally expensive, we make use of random Fourier features \cite{RF} in approximating $K$ with an explicit feature map i.e a sentence embedding. Let $\Phi_{\gamma,d}(a) \in \mathbb{R}^{m}, \Phi_{\gamma,d}(a)=\frac{\sqrt{2}}{\sqrt{m}}(\cos(\scalT{w_1}{a}+b_1)\dots \cos(\scalT{w_m}{a}+b_m)), w_j \sim \mathcal{N}(0,\gamma I_{d})$, and $b \sim \rm{Unif}[0,2\pi]$, we have $\scalT{\Phi_{\gamma,d}(a)}{\Phi_{\gamma,d}(b)}\approx k_{\gamma,d}(a,b)$. Hence we define the randomized kernel mean map of $\rho,\hat{\mu}_n(\rho)= \frac{1}{n}\sum_{i=1}^n \Phi_{\gamma,d}(a_i).$  For sufficiently large $m$ we have %\vspace{-0.06 in}
$$K(\rho_1,\rho_2)\approx  k_{\eta,m}\left( \hat{\mu}_{n_1}(\rho_1) ,\hat{\mu}_{n_2}(\rho_2) \right).$$
$k_{\eta,m}$ is also a shift invariant kernel in $m$ dimension, that can be in its turn approximated with a random feature map $\Phi_{\eta,m} \in \mathbb{R}^{m'}$. For sufficiently large $m'$, we have $$K(\rho_1,\rho_2)\approx \scalT{\Phi_{\eta,m}\left(\hat{\mu}_{n_1}(\rho_1)\right) }{\Phi_{\eta,m}\left( \hat{\mu}_{n_2}(\rho_2)\right)}_{\mathbb{R}^{m'}}.$$
Hence $K$ is approximated by a deep ($2$ layers) map, that is the composition of a non linear average pooling and a non linear feature map. 
By embedding sentences consisting of word vectors (word2vec) ${a_1\dots a_n}$  using $\Phi_{\eta,m}\left(\frac{1}{n}\sum_{i=1}^{n} \Phi_{\gamma,d}(a_i)\right)$, we compute implicitly the similarity $K$ between the bag of words distributions. While $k_{\gamma,d}$ or $\Phi_{\gamma,d} $ act as a localizer at the word level,  $k_{\eta,m}$ or $\Phi_{\eta,m}$ localize on the sentence level. Hence we note them respectively with $k^w,\Phi^w$ (word level kernel), and $k^s,\Phi^s$ (sentence level kernel).  We note the  embedding with HKSE($k^w,k^s$). The average word2vec representation proposed in \cite{Klein} corresponds to HKSE($k^w,k^s$), where $k^w$ and $k^s$ are linear kernels. 
%It is easy to see that the inner dimension $m$ scales logarithmically with the size of the vocabulary, the outer dimension $m'$ scales linearly with the maximum sentence length and po
%\noindent \textbf{Inner and outer Kernels Dimensions.} \textcolor{blue}{$\log(|\mathcal{A}|)$ TBD}\\
The following proposition proved in Appendix \ref{app:prop1} gives bounds on $m$ and $m'$:
\begin{proposition} [HKSE approximation] Let $\mathcal{A}$ be the vocabulary embedded in a vector space. Let $|\mathcal{A}|$ be the size of the vocabulary. Let $s$ be the maximum sentence length. Let $\hat{\mu}_{n_1}(\rho_1)= \frac{1}{n_1}\sum_{i=1}^{n_1}\Phi_{\gamma,d}(a_i)$, and $\hat{\mu}_{n_2}(\rho_2)= \frac{1}{n_2}\sum_{i=1}^{n_2}\Phi_{\gamma,d}(b_i)$, on two different sentences $\rho_1,\rho_2$, with words $\{a_1,\dots a_{n_1}\}$ and $\{b_1,\dots b_{n_2}\}$ respectively. Let $\hat{K}(\rho_1,\rho_2)=\scalT{\Phi_{\eta,m}\left(\hat{\mu}_{n_1}(\rho_1)\right)}{\Phi_{\eta,m}\left(\hat{\mu}_{n_2}(\rho_2)\right)}$ Let $\epsilon,\delta >0$, for $m\geq  \frac{1}{2\delta^2}\log(|\mathcal{A}|^2/\epsilon)$ and $m'\geq  \frac{1}{2\delta^2}\log(|\mathcal{A}|^{2s}/\epsilon)$ we have with probability $1-2\epsilon$:\\
$\frac{1}{c(\eta,\delta)} K(\rho_1,\rho_2) - \delta  \leq \hat{K}(\rho_1,\rho_2) \leq  c(\eta,\delta) K(\rho_1,\rho_2)  +\delta ,$
where $c= \exp(\frac{3\eta \delta}{2})$.
\end{proposition}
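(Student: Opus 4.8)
The plan is to exploit the two-layer structure of $\hat{K}$: it applies word-level random Fourier features $\Phi_{\gamma,d}$ (approximating the word kernel $k^w=k_{\gamma,d}$) inside the pooled mean map $\hat{\mu}$, and then sentence-level random features $\Phi_{\eta,m}$ (approximating the sentence kernel $k^s=k_{\eta,m}$) on top. I would introduce the intermediate quantity $\tilde{K}(\rho_1,\rho_2):=k_{\eta,m}(\hat{\mu}_{n_1}(\rho_1),\hat{\mu}_{n_2}(\rho_2))=\exp(\frac{\eta}{2}\hat{\Delta})$, where $\hat{\Delta}=-\|\hat{\mu}_{n_1}(\rho_1)-\hat{\mu}_{n_2}(\rho_2)\|^2$ is the empirical analogue of $\Delta$, and split the error into a word-level part ($\tilde{K}$ versus $K$, which will be \emph{multiplicative}) and a sentence-level part ($\hat{K}$ versus $\tilde{K}$, which will be \emph{additive}). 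The two parts use disjoint randomness at each layer, are each controlled on a high-probability event, and are combined at the end by a union bound.

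\textbf{Word level.} By the random Fourier feature construction \cite{RF}, $\E\langle\Phi_{\gamma,d}(a),\Phi_{\gamma,d}(b)\rangle=k_{\gamma,d}(a,b)$, and $\langle\Phi_{\gamma,d}(a),\Phi_{\gamma,d}(b)\rangle$ is an average of $m$ i.i.d.\ per-frequency terms bounded in $[-2,2]$. Hoeffding's inequality makes the probability of $|\langle\Phi_{\gamma,d}(a),\Phi_{\gamma,d}(b)\rangle-k_{\gamma,d}(a,b)|>\delta$ exponentially small in $m\delta^2$; since every word lies in the finite vocabulary $\mathcal{A}$, a union bound over all $|\mathcal{A}|^2$ word pairs makes this $\delta$-estimate uniform with probability $\ge 1-\epsilon$ as soon as $m\ge\frac{1}{2\delta^2}\log(|\mathcal{A}|^2/\epsilon)$. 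On this event I propagate the $\delta$-bound through the three inner-product blocks making up $\hat{\Delta}$ (the cross term and the two self-terms of $\|\hat{\mu}_{n_1}-\hat{\mu}_{n_2}\|^2$), giving $|\hat{\Delta}-\Delta|\le 3\delta$. Because $\tilde{K}/K=\exp(\frac{\eta}{2}(\hat{\Delta}-\Delta))$, this additive exponent bound turns into the multiplicative sandwich $\frac{1}{c}K\le\tilde{K}\le cK$ with $c=\exp(\frac{3\eta\delta}{2})$, using $K=\exp(\frac{\eta}{2}\Delta)$ and $\tilde{K}=\exp(\frac{\eta}{2}\hat{\Delta})$.

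\textbf{Sentence level.} Here $\hat{K}=\langle\Phi_{\eta,m}(\hat{\mu}_{n_1}),\Phi_{\eta,m}(\hat{\mu}_{n_2})\rangle$ is again an $m'$-term bounded average with mean $k_{\eta,m}(\hat{\mu}_{n_1},\hat{\mu}_{n_2})=\tilde{K}$, so Hoeffding controls $|\hat{K}-\tilde{K}|$ additively. The delicate point, and the main obstacle, is the union bound: the inputs $\hat{\mu}$ to the outer layer live in the continuous space $\mathbb{R}^m$, so one cannot naively quantify over all arguments. The resolution is to observe that the only arguments that actually occur are the empirical mean embeddings of genuine sentences, and a sentence of length at most $s$ over $\mathcal{A}$ takes at most $|\mathcal{A}|^s$ values; hence there are at most $|\mathcal{A}|^{2s}$ relevant sentence pairs. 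Union-bounding the Hoeffding tail over this finite set yields $|\hat{K}-\tilde{K}|\le\delta$ uniformly with probability $\ge 1-\epsilon$ once $m'\ge\frac{1}{2\delta^2}\log(|\mathcal{A}|^{2s}/\epsilon)$, which is exactly the stated threshold (the exponent $2s$ reflecting sentence pairs rather than word pairs).

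Finally I intersect the two high-probability events (total failure probability $\le 2\epsilon$) and chain the bounds: $\hat{K}\le\tilde{K}+\delta\le cK+\delta$ and $\hat{K}\ge\tilde{K}-\delta\ge\frac{1}{c}K-\delta$, which is precisely the claimed two-sided estimate. The only genuinely nontrivial ingredient is the finiteness argument that replaces the usual uniform covering-number random-feature bound at the outer layer; everything else is Hoeffding plus the bookkeeping that converts an additive error in the exponent into the multiplicative constant $c$.
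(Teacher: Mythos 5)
Your proposal follows essentially the same route as the paper's own proof: the same intermediate quantity $k_{\eta,m}(\hat{\mu}_{n_1},\hat{\mu}_{n_2})$, the same word-level Hoeffding-plus-union bound over the $|\mathcal{A}|^2$ word pairs converted into the multiplicative factor $\exp(\pm\tfrac{3\eta\delta}{2})$ via the estimate on $\bigl| \nor{\hat{\mu}_{n_1}(\rho_1)-\hat{\mu}_{n_2}(\rho_2)}^2-\Delta \bigr|$, the same sentence-level additive Hoeffding bound with a union bound over the at most $|\mathcal{A}|^{2s}$ sentence pairs, and the same final intersection of the two events giving probability $1-2\epsilon$. (One shared quibble: since the cross term in $\Delta$ carries total weight $2$, the propagated word-level error is really $4\delta$ rather than $3\delta$, but the paper's proof makes exactly the same count, so your argument is faithful to it.)
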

Informally for the 2 layers kernel the estimation error is multiplicative and additive. The inner word level dimension of HKSE scales logarithmically with the vocabulary size $m =O(\log(|\mathcal{A}|))$, and the outer dimension scales linearly with the maximum sentence length $s$ an logarithmically with the vocabulary size  $m'=O(s\log(|\mathcal{A}|) )$.
\begin{remark}
We give in Appendix \ref{app:Gauss2Vec} a probabilistic interpretation of HKSE as a Gaussian Embedding of sentences.
\end{remark}

\section{Relation to Previous work}
%Joint modeling of language and vision is a rapidly growing field
We focus in this section on some recent works that use bidirectional maps in the retrieval tasks and their relation to this paper.
\cite{Klein}, and \cite{Gong} used CCA to build a joint representation using the cosine similarity. In both works a symmetric weighting of the CCA canonical weights was used, i.e for an image caption pair $(x^*,y^*)$, a joint embedding of the form $(\Sigma^{\alpha}U^{\top}x^*,\Sigma^{\alpha}V^{\top}y^*)$, was used, where $\alpha=0$, in the case of  \cite{Klein}  and $\alpha>0$ in the case of \cite{Gong}. 
The symmetric weighting in \cite{Gong} was  found to improve performance and is not theoretically motivated as in the asymmetric weighting of this paper. Indeed our experimental results show that asymmetric weighting gives higher performance. Skip thought vectors introduced in \cite{skipthoughts} for representing sentences were used for bidirectional search in conjunction with VGG features \cite{vgg} on the image side, with linear embeddings learned with a discriminative triplets loss, instead of a CCA loss. Order embeddings introduced in \cite{vendrov2015order} uses an order similarity  rather than the cosine similarity  and learns an end to end sentence embedding with an LSTM. Another end to end approach using deep neural networks on image an text features under a structured loss was introduced  \cite{WangLL15}.
% and non linear features in CITE WANG . %AW-G-Tikh-CCA and non linear  HKSE lead to better generalization in retrieval tasks.  %\textcolor{red}{Socher}
\section{Numerical Results}\label{sec:exp}
%\subsection{Retrieval Results}
\noindent \textbf{Datasets.} We performed image annotation and search tasks on the MSCOCO benchmark \cite{MSCOCO}, and  Flickr 8K and 30 K benchmarks \cite{Flickr8k,Flickr30k} using our task dependent  asymmetrically weighted CCA, as described in Table \ref{tab:asymW}. Retrieval was performed using cosine similarities given in Equations \eqref{eq:imSearchcca1} and \eqref{eq:imAnnotationcca1}. For details on data splits and experimental protocol check Appendix \ref{app:splits} . We report for both
tasks the recall rate at one result, five results, or ten first
results (r@1,5,10), as well as the median rank of the first ground truth retrieval on the test set.\\

\noindent \textbf{Image/Text Features.} For feature extraction we follow \cite{Klein}, and use on the image side the VGG CNN representation \cite{vgg}, where each image was rescaled  to have smallest side 384 pixels, and then cropped in $10$  ways into 224 by 224 pixel images: the four corners, the center, and their x-axis mirror image. The mean intensity of each crop is subtracted in each color channel, and then encoded by VGG19 (the final FC -4096 layer). The average of the resulting 10 feature vectors corresponding to each crop is used as the  image representation. We also use the residual convolutional neural network Resnet101 ($101$ layers)\cite{he15deepresidual}. We don't rescale or crop the image , we encode the full size image  with the Resnet and  apply spatial average pooling to the last layer  resulting in a vector of dimension $m_x= 2048$. 
% (Res B)  where we apply  spatial adapative average pooling to the last $\textcolor{blue}{x}$ layers of Resnet, that we concatenate, this results in a vector of dimension $\textcolor{blue}{m_x= }$.  Stacking resnet layers gives us a form of multi-scale description of the image in high dimensions. CCA captures more correlations in higher dimensions.
 On the text side, we use two sentence embeddings 1) $\rm{HKSE}(k^w,k^s)$, introduced in this paper, using word2vec  available on \url{code.google.com/p/word2vec/} (throughout our experiments, $\gamma$ was set to be the inverse of the squared median pairwise distances of words in the vocabulary, $\eta$ was fixed to $0.01$, for MSCOCO we use $m=2000$ and $m'=3000$, for Flickr we use $m=2000$ and $m'=2000$). Note that HKSE(lin,lin), corresponds to the mean word2vec baseline as in \cite{Klein}. 2) Skip thought vectors introduced in \cite{skipthoughts}, which encodes sentences to vectors using an LSTM. Image and text features were centered before learning CCA. For AW-CCA we used cosine similarity that we find to outperform $\ell_2$, and performed search and annotation as given in Equations \eqref{eq:imSearchcca1}, and \eqref{eq:imAnnotationcca1} respectively.

\noindent \textbf{Empirical Validation of Our Approach.}\label{app:Opt}\\
We confirm in this section empirically our main claims and contributions:\\
%\begin{itemize}[noitemsep]
1) \emph{Query Generation Versus Search Generation:} In order to confirm our claims in Section \ref{sec:cca}, on the optimality of the asymmetric weighting in the query generation approach, we perform bidirectionnal retrieval  on MS-COCO using  image embedding of the form  $\Sigma^{\alpha} U^{\top}$ and caption embedding of the form $\Sigma^{1-\alpha}V^{\top},$ where $\alpha \in [0,1]$. We plot in Figure \ref{fig:opasym} r@10 versus $\alpha$ on the validation set, for image annotation and image search.  The cosine similarity was used between embeddings. In this experiment we used VGG image features  and average word2vec sentence embedding. For each $\alpha$ we perform a thorough cross validation using truncated SVD CCA, and report the best r@10 for image annotation and image search. We see that $\alpha= 0$, i.e the embedding $(U^{\top},\Sigma V^{\top})$ is optimal for image annotation as predicted by the query generation approach for image annotation. Similarly we see that $\alpha=1$, i.e the embedding $(\Sigma U^{\top},V^{\top})$, is optimal for image search. This confirms our claim that the asymmetric weighting should be on the search space.  To illustrate the organization of the space with asymmetric CCA, we refer the reader to t-SNE plots in  Appendix \ref{sec:tsne}.\\ 
\begin{figure}[ht!]
  \centering
  \includegraphics[scale=0.23]{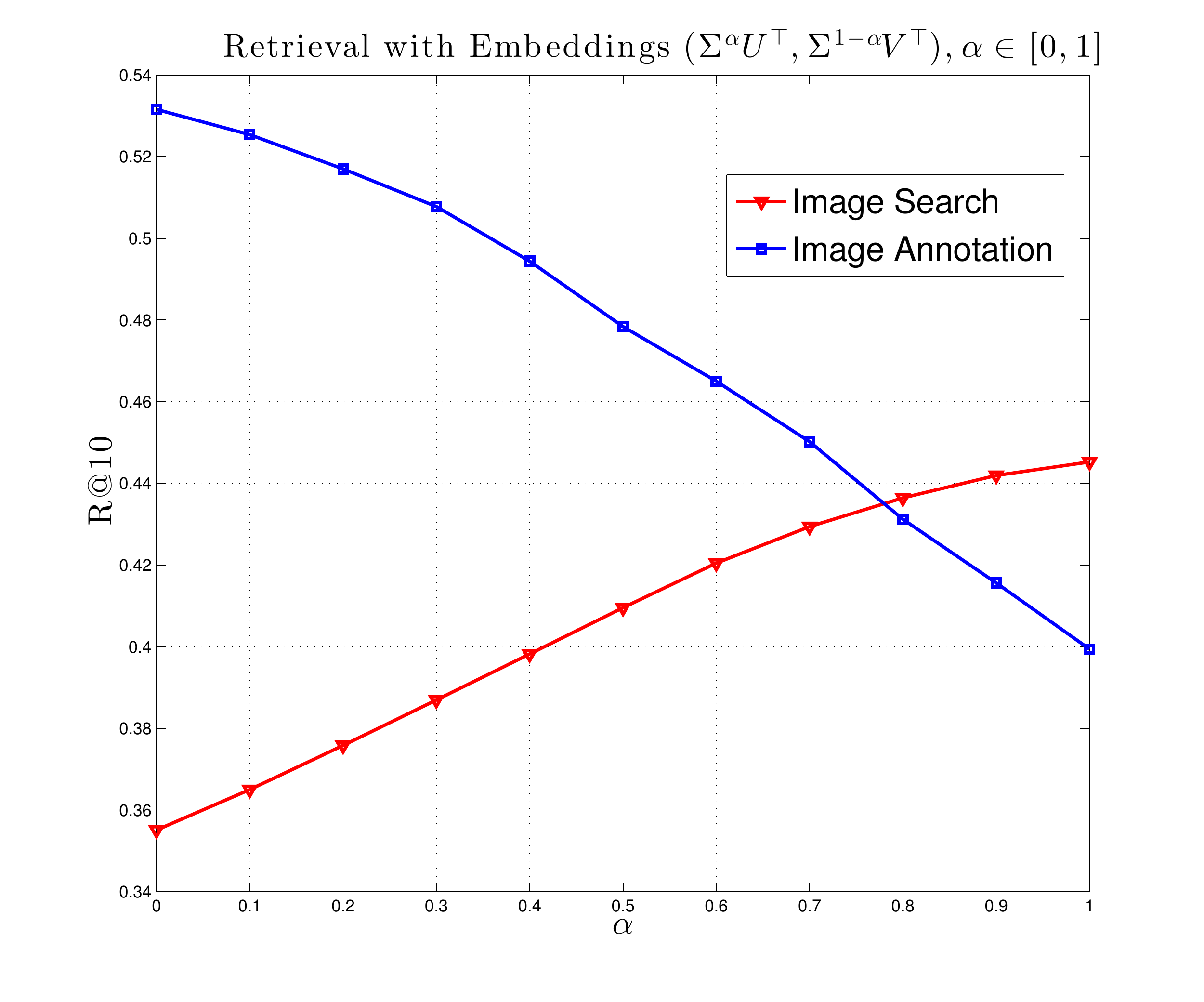}
  \caption{Optimality of the asymmetric weighting in the query generation approach.}
  \label{fig:opasym}
  \vskip -0.2 in
\end{figure}
 
\noindent 2) \emph{Asymmetric Weighting versus CCA and Symmetric Weighting:} On MS-COCO, using same VGG/average word2vec features, we perform bidirectional retrieval using embeddings of the form $(\Sigma^{\alpha}U^{\top},\Sigma^{\alpha}V^{\top})$. Note that $\alpha =0$ corresponds to CCA as used in \cite{Klein}, and $\alpha>0$ corresponds to the symmetric weighting introduced in \cite{Gong}, we see in Table \ref{tab:SymWeight} that AW-CCA outperforms both approaches. \\
   \begin{table*}[ht]
\hspace*{-0.005in}\centering
\begin{tabular}{|l|lll|lll|}
\hline
& \multicolumn{3}{|c|}{Image search} & \multicolumn{3}{|c|}{Image annotation} \\
& r@1 & r@5 & r@10 & r@1 & r@5 & r@10  \\
%& & & & rank & rank & & & & rank & rank& rank\\
\hline

Symmetric Weighting                                                                &      &      &      &      &      &          \\
$(\Sigma^{\alpha} U^{\top},\Sigma^{\alpha} V^{\top})$, $\alpha =0$ (CCA)   &9.36 & 26.13 & 37.30& {13.96} & {34.30}& {47.10}  \\
$(\Sigma^{\alpha} U^{\top},\Sigma^{\alpha} V^{\top})$, $\alpha =1$   &{11.21}& {28.95}& {40.24}&{13.72} &{34.00}&  {45.86} \\
$(\Sigma^{\alpha} U^{\top},\Sigma^{\alpha} V^{\top})$, $\alpha =2$ &10.04&  26.54&  37.46& 11.16&  29.02&  39.56\\
$(\Sigma^{\alpha} U^{\top},\Sigma^{\alpha} V^{\top})$, $\alpha =3$ &8.30& 23.53&33.78&9.36&24.74&35.56\\
$(\Sigma^{\alpha} U^{\top},\Sigma^{\alpha} V^{\top})$, $\alpha =4$ &7.09& 20.79& 30.81& 7.88& 21.58&  32.22\\
$(\Sigma^{\alpha} U^{\top},\Sigma^{\alpha} V^{\top})$, $\alpha =5$ &6.16&18.63& 27.79& 6.76&19.76& 29.46\\
$(\Sigma^{\alpha} U^{\top},\Sigma^{\alpha} V^{\top})$, $\alpha =6$ &5.55&16.76&25.31& 6.08&17.80&27.12\\
\hline
\hline
Task Dependent Asymmetric Weighting:                                                            &      &      &      &      &      &          \\                                                            
Image Search:$(\Sigma U^{\top},V^{\top})$, Image Annotation:$(U^{\top},\Sigma V^{\top})$   & {12.92} & {32.38} & {44.52} & {17.84} & {40.42} & {53.16} \\
 \hline
\end{tabular}
\vspace{.2cm}
\caption{The asymmetric weighting we propose outperforms the symmetric weighting \cite{Gong} and CCA as shown in this table, and boosts performance (retrieval rates in \%) of both tasks. }
\label{tab:SymWeight}
\end{table*} 

\noindent 3) \emph{Guided Tikhonov Cross Validation Speed/Performance Tradeoff:} 
We report in Table \ref{tab:asymWTIMING} the timing (in hours) of computing regularization path of AW- CCA, with T-SVD and Tikohnov, on MSCOCO. The parameter grid for both cases is 20$\times$20. Experiments were conducted on a single Intel Xeon CPU E5-2667, 3.30GHz, with 265 GB of RAM and 25.6 MB of cache. We see that T-SVD is faster to cross validate than Tikhonov : $2$x speedup for VGG/w2v ($m_x$=4096, $m_y$=300), $6$x speedup for VGG/skip thoughts ($m_x$=4096, $m_y$=4800). Thanks to the speed advantage of 
T-SVD we confirm that T-SVD Guided Thikhonov regularization introduced in Section \ref{sec:BGTikhonov.} incurs a small loss in accuracy  with respect to full Tikhonov validation on Flickr 30K dataset.
 \begin{table}[H]
%In term of performance we see in Tables \ref{tab:MSCOCO} and \ref{tab:MSCOCO2} that (Guided by T-SVD) Tikhonov regularization has a slightly better performance than truncated SVD. }
\label{sample-table}
\begin{center}
\begin{tabular}{lllll}
\multicolumn{1}{c}{\bf AW- CCA} &\multicolumn{1}{c}{\bf CPU time}
\\ \hline \\
T-SVD (VGG/w2v)  & $1.04 $ h\\
Tikhonov- CCA (VGG/w2v)  &$1.94$ h\\     
T-SVD (VGG/Skip)& $6.92 $ h\\
 Tikhonov  (VGG/Skip)  &$45.31$ h     
\end{tabular}
\end{center}
\caption{ Timing in hours of CCA Cross Validation: T-SVD versus Tikhhonov.} 
\label{tab:asymWTIMING}
\end{table}
\vskip -0.2 in
\begin{table}[ht!]
\hspace*{-0.05 in}
\centering
\small\addtolength{\tabcolsep}{-3 pt}
\begin{tabular}{{|l|lll|lll|l}}
\hline
& \multicolumn{3}{|c|}{Image search} & \multicolumn{3}{|c|}{Image annotation} \\
 AW-CCA& r@1 & r@5 & r@10  &  r@1 & r@5 & r@10   \\
& & &   & & & \\
\hline
Tikh &\textbf{23.08}&\textbf{50.62}&\textbf{63.53}&\textbf{30.66}&\textbf{59.1}&\textbf{70.13} \\                                                                  
T-SVD  & 20.61&46.91&59.9&27.0&55.5&67.80  \\  
 G-Tikh &\textbf{22.40}&\textbf{49.94}&\textbf{62.62}&\textbf{30.56}&\textbf{58.43}&\textbf{69.96}\\  
 \hline                                                                
\end{tabular}
\vspace{.2cm}
\caption{Mean results (in \%) of the test splits on the Flickr 30 K, in average VGG/ Average w2vec setup. We see that G-Tikh and Tikh are on par.}% \textcolor{blue}{med r cv, we HAVE R@1 for full tikh not the rest}}
\label{tab:flickrComp}
\vspace{-0.2 in}
\end{table} 

\noindent \textbf{Discussion of the Results.} We report here our results using  the Asymmetrically Weighted TSVD-Guided-Tikhonov  CCA (AW-G-Tikh-CCA) given in Algorithm \ref{ALG:GTikhonovcca} (Appendix \ref{ap:algorithms}) where we select the model corresponding to the maximum r@1 on the validation set (regularization paths are given in Appendix \ref{app:RegPaths}). When we use  VGG for image encoding we observe in Tables \ref{tab:MSCOCO} and \ref{tab:flickr} that 1)   AW-G-Tikh-CCA with the same Mean Vec features (HKSE(lin,lin)) outperforms consistently the CCA baseline in \cite{Klein}, this is  mainly due to the asymmetric weighting and the query generation approach, as well as the efficient model selection introduced in this paper. 2) HKSE(rbf,rbf) consistently outperforms the mean vector baseline as well as skip thoughts vectors and can be used therefore as a simple yet strong off the shelf sentence embedding. HKSE(rbf,rbf) outperforms the fisher vector representation  \cite{Klein}, on the MSCOCO Benchmark. Note that, Fisher vectors are learned on the dataset, and HKSE is fully unsupervised. For the same image features (VGG) end to end  order embeddings (VGG)\cite{vendrov2015order} outperforms HKSE on image search on COCO. %Order embeddings use an order similarity and the sentence embedding is learned. \\
We see from Tables \ref{tab:MSCOCO} and \ref{tab:flickr} that  encoding images with Resnet and spatial average pooling, in conjunction with the concatenation of [HSKE(lin,rbf)||HKSE(rbf,rbf)]  boosts the performance of AW-CCA and achieve state of the art performance on both tasks.  Note that we get significant boosts on both image search and annotation tasks since the asymmetric weighting of CCA gives us for free an optimized system for each task, while competing methods optimize jointly for both tasks using the same embeddings.   
  %\textcolor{blue}{more comments on results ...}%
%\noindent \textbf{Retrieving A Caption Set.} We consider the problem of assigning to each image the set of five ground truth captions. In order to return a single caption  we  select within  the returned set the caption with largest cosine. Note that for MSCOCO and Flickr datasets $S= \{(x_i, \{y_{i,j },j=1\dots 5\}),i=1\dots N \}$ the  correlation matrix :
%$C_{XY}= \sum_{i=1}^N x_i \sum_{j=1}^5 y^{\top}_{i,j}.$
%Hence the CCA objective is naturally correlating the image $x_i$ with the unnormalized average caption $\sum_{j=1}^5 y^{\top}_{i,j}$.
%Hence the natural representation of the set of five captions by their average.  As more words are available HKSE(rbf,rbf) gets a better estimate of the underlying distribution and achieves state of the art performance on annotation (Ann Pack in Tables \ref{tab:MSCOCO} and \ref{tab:flickr} ). 
%  

\begin{table*}[ht]
\vspace{-0.8cm}
\hspace*{-0.0005in}
\centering
\small\addtolength{\tabcolsep}{-3.5pt}
\begin{tabular}{{|l|llll|llll|l}}
\hline
& \multicolumn{4}{|c|}{Image search} & \multicolumn{4}{|c|}{Image annotation} \\
& r@1 & r@5 & r@10 & med r &  r@1 & r@5 & r@10 & med r  \\
%& & & & rank  & & & & rank \\
\hline
1K test images:                                                                    &      &      &      &      &       &      &      &                \\
~CCA+ Mean Vec {\cite{Klein} }                                                                 & 24.2 & 56.4 & 72.4 & 4.0 & 33.2 & 61.8 & 75.1 & 3.0 \\
%~Mean Vec (vocabPCA+AW-T-SVD-CCA)                                                                    & {26.67} & {59.84} & {74.90} & {4.0}  & {36.28} & {68.16}& {80.72} & {2.0}  \\
~AW-G-Tikh-CCA  +VGG+ HKSE(lin,lin) (Mean Vec)                                                                     & {28.14} & {61.33} & {76.64} & {3.0}  & {37.16} & {68.78}& {81.36} & {2.0}  \\
%~Mean Vec (vocabPCA+AW-\emph{kernel}-T-SVD-CCA)                                                                    &{29.80}&{64.09}&{78.43}&{3.0}     & {39.92} & {71.34}& {83.52} & {2.0}  \\
~AW-G-Tikh-CCA+VGG+HKSE (lin,rbf)                                                                        & {30.99} & {65.43} & {79.49} & {3.0}  & {39.94} & {71.82}& {84.1} & {2.0}  \\
~AW-G-Tikh-CCA+VGG+HKSE (rbf,lin)                                                          & {29.54} & {63.79} & {78.70} &{3.0} & {41.10} & {70.90} & {82.46} & {2.0}   \\
~AW-G-Tikh-CCA+VGG+HKSE (rbf,rbf)                                                                & {32.70} & {67.51} & {81.14} & {3.0} & {43.64} & {74.94} & {85.68} & {2.0}   \\
~AW-G-Tikh-CCA+VGG+[HKSE (lin,rbf)||HKSE(rbf,rbf)]               & {34.84} & {69.92} & {82.73} & {3.0}  & {46.54} & {77.50}& {87.22} & {2.0}  \\
%~AW-G-Tikh-CCA +VGG+HKSE(rbf,rbf) (Ann Pack)       & NA&NA &NA&NA   & \textbf{55.12} & \textbf{86.36}& \textbf{94.24} & \textbf{1.0}  \\
~AW-G-Tikh-CCA+ResNet+HKSE (lin,lin)                                         & {32.95} & {67.12} & {81.07} & {3.0}  & {43.1} & {75.92}& {87.24} & {2.0}  \\
~AW-G-Tikh-CCA+ResNet+[HKSE (lin,rbf)||HKSE(rbf,rbf)]               & \textbf{39.68} & \textbf{73.98} & \textbf{86.23} & \textbf{2.0}  & \textbf{55.16} & \textbf{83.84}& \textbf{92.52} & \textbf{1.0}  \\
%~AW-G-Tikh-CCA+Res(B)+HKSE (lin,lin)                                          & {34.66} & {68.76} & {82.21} & {3.0}  & {45.82} & {78.18}& {89.02} & {2.0}  \\
%~AW-G-Tikh-CCA+Res(B)+[HKSE (lin,rbf)||HKSE(rbf,rbf)]                                                                       & {40.67} & {75.24} & {86.98} & {2.0}  & {55.24} & {84.86}& {93.34} & {1.0}  \\    
%~Skip thoughts (AW-T-SVD-CCA)                                                                      & {27.76} & {62.25} & {76.30} & {3.0}  & {38.42} & {70.10}& {82.84} & {2.0}   \\
~AW-G-Tikhonov-CCA+VGG+Skip thoughts                                                                    & {29.14} & {63.74} & {77.53} & {3.0}  & {39.24} & {70.44}& {82.68} & {2.0}   \\
~Triplets loss +VGG+Skip thoughts {\cite{skipthoughts} }                                                                       & 25.9 & 60 & 74.6 &NA & {33.8} & {67.7}& {82.1} & {NA} \\
~BRNN+VGG \cite{Karpathy}~                                                                            & 20.9 & 52.8 & 69.2 & 4.0  & 29.4 & 62.0 & 75.9 & 2.5     \\
~CCA +VGG+Fisher GMM+HGLMM   {\cite{Klein} }                                                                               & 25.6 & 60.4 & 76.8 & 4.0  & 38.9 & 68.4 & 80.1 & 2.0  \\
~Order Embedding+VGG \cite{vendrov2015order}                                                                  & {37.9} & {NA}& {85.9} & {2.0} & {46.7} & {NA} & { 88.9} & {2.0}    \\

\hline
5K test images:                                                                    &      &      &      &      &      &      &      &            \\
~CCA +VGG+Mean Vec   {\cite{Klein} }                                                                    & 10.3 & 27.2 & 38.4 & 18.0 & 12.8 & 32.1 & 44.6 & 14.0 \\
%~Mean Vec  (vocabPCA+ AW-T-SVD-CCA)                                                                   & {11.86} & {30.56} & {42.30} & {15.0} & {17.22} & {39.60} & {52.84} & {9.0}   \\
~AW-G-Tikh-CCA +VGG+HKSE(lin,lin) (Mean Vec)                                                                 & {12.91} & {32.17} &{44.24} & {14.0} & {17.9} &{40.30} & {53.12} & {9.0}   \\
%~Mean Vec  (vocabPCA+ AW-\emph{kernel}-T-SVD-CCA)                                                                   & {12.93} & {34.20} & {47.04} & {12.0} & {18.74} & {42.62} & {56.44} & {8.0}   \\
~AW-G-Tikh-CCA+VGG +HKSE (lin,rbf)                                                                 & {14.24} & {35.39} & {48.34} & {11.0} & {19.06} & {43.02} & {57.02} & {8.0}   \\
~AW-G-Tikh-CCA+VGG+HKSE (rbf,lin)                                                             & {13.46} & {33.74} & {46.08} &{13.0} & {19.98} & {44.50} & {56.90} & {7.0}   \\
~AW-G-Tikh-CCA+VGG+HKSE (rbf,rbf)                                                             & {15.44} & {37.44} & {50.69} & {10.0} & {22.14} & {47.76} &{60.68} &{6.0}   \\
~AW-G-Tikh-CCA+VGG+[HKSE (lin,rbf)||HKSE(rbf,rbf)]               & {16.63} & {40.35} & {53.60} & {9.0}  & {23.70} & {50.32}& {63.14} & {5.0}  \\
%~AW-G-Tikh-CCA  + HKSE(rbf,rbf) (Ann Pack)& NA&NA &NA&NA & \textbf{32.46} & \textbf{62.20}& \textbf{75.00} & \textbf{3.0}  \\
~AW-G-Tikh-CCA+ResNet+HKSE (lin,lin)                                                                        & {15.69} & {37.64} & {50.29} & {10.0}  & {22.58} & {47.3}& {60.98} & {6.0}  \\
~AW-G-Tikh-CCA+ResNet+[HKSE (lin,rbf)||HKSE(rbf,rbf)]            & \textbf{20.22} & \textbf{45.05} & \textbf{58.53} & \textbf{7.0}  & \textbf{31.48} & \textbf{60.54}& \textbf{72.32} & \textbf{3.0}  \\
%~AW-G-Tikh-CCA+Res(B)+HKSE (lin,lin)                                       & {16.69} & {39.51} & {52.22} & {9.0}  & {24.48} & {50.72}& {64.34} & {5.0}  \\
%~AW-G-Tikh-CCA+Res(B)+[HKSE (lin,rbf)||HKSE(rbf,rbf)]                & {20.66} & {46.39} & {59.84} & {7.0}  & {31.94} & {60.54}& {73.20} & {3.0}  \\
%~Skip thoughts (AW-T-SVD-CCA)                                                                         & {11.81} &{31.96} & {45.15} & {13.0} & {17.68} & {41.44} & {54.68} & {8.0}   \\
~AW-G-Tikhonov-CCA+VGG+Skip thoughts                                                                & {12.83} &{33.73} & {47.04} & {12.0} & {18.5} & {42.24} & {55.34} & {8.0}   \\
~Triplets loss +VGG+Skip thoughts {\cite{skipthoughts} }                                                                       &  NA& NA &  NA &  NA &  NA &  NA &  NA & NA \\
~BRNN+VGG      \cite{Karpathy}                                                                            & 8.9  & 24.9 & 36.3 & 19.5   & 11.8 & 32.5 & 45.4 & 12.2   \\
%~BRNN  \cite{Karpathy}                                                                            & 8.9  & 24.9 & 36.3 & 19.5   & 11.8 & 32.5 & 45.4 & 12.2   \\
%~BRNN  \cite{Karpathy}                                                                           &11.8& 32.1& 44.7& 12.4&  16.5& 40.6& 54.2& 7.6 \\
%~BRNN  \cite{Karpathy}                                                                           &11.8& 32.1& 44.7& 12.4&  16.5& 40.6& 54.2& 7.6 \\
%~GMM                                                                 & 10.5 & 28.0 & 39.4 & 17.0 & 62.3 & 17.0 & 38.1 & 49.8 & 11.0 & 52.2 & 58.0 \\
%~LMM                                                                 & 10.5 & 28.0 & 39.7 & 17.0 & 61.3 & 16.4 & 38.1 & 50.4 & 10.0 & 51.2 & 57.0 \\
%~HGLMM                                                               & 11.1 & 28.7 & 40.2 & 16.0 & 59.0 & 16.7 & 38.4 & 51.0 & 10.0 & 47.8 & 56.2 \\
~CCA +VGG+Fisher GMM+HGLMM {\cite{Klein} } & 11.2 & 29.2 & 41.0 & 16.0 & 17.7 & 40.1 & 51.9 & 10.0 \\
~Order Embedding+VGG \cite{vendrov2015order}                                                                   & {18.0 } & {NA} & {57.6} & {7.0}  & {23.3} & {NA}& {65} & {5.0}  \\ 

\hline
\end{tabular}
\caption{Mean results of the test splits on the MSCOCO benchmark (in \%). }
\label{tab:MSCOCO}
\vspace{-0.4 cm}
\end{table*}

\begin{table*}[ht]
%\vspace{-0.85cm}
\hspace*{-0.005in}
\centering
\small\addtolength{\tabcolsep}{-3.5pt}
\scalebox{1}{
\begin{tabular}{{|l|llll|llll|l}}
\hline
& \multicolumn{4}{|c|}{Image search} & \multicolumn{4}{|c|}{Image annotation} \\
& r@1 & r@5 & r@10 & med r&  r@1 & r@5 & r@10 & med r  \\
%& & & & rank  & & & & rank \\
\hline
Flickr 30 K                                                             &      &      &      &      &       &      &      &                \\
~CCA + VGG+Mean Vec {\cite{Klein} }                                                          & 20.5 & 46.3& 59.3& 6.8&24.8& 52.5& 64.3& 5.0\\                        
~AW-G-Tikh-CCA+VGG+ HKSE (lin,lin) (Mean Vec) &{22.40}&{49.94}&{62.62}&{6.0}&{30.56}&{58.43}&{69.96}&{4.0} \\                                                                  
~AW-G-Tikh-CCA+VGG+HKSE (rbf,rbf)        &{25.80}&{54.72}&{66.52}&{4.0}&{32.5}&{61.03}&{73.23}&{3.0}\\   
~AW-G-Tikh-CCA+VGG+[HKSE (lin,rbf)||HKSE(rbf,rbf)]               & {26.43} & {55.03} & {66.39} & {4.0}  & {33.73} & {63.33}& {75.17} & {3.0}  \\ 
%~AW-G-Tikh-CCA +VGG+HKSE(rbf,rbf)  (Ann Pack)                                                            & NA&NA &NA&NA & \textbf{43.1} & \textbf{74.4}& \textbf{84.3} & \textbf{2.0}  \\
~AW-G-Tikh-CCA+ResNet+HKSE (lin,lin)                                                                        & {28.29} & {56.81} & {69.0} & {4.0}  & {37.0} & {66.67}& {78.23} & {2.0}  \\
~AW-G-Tikh-CCA+ResNet+[HKSE (lin,rbf)||HKSE(rbf,rbf)]                                              & \textbf{31.48} & \textbf{60.71} & \textbf{72.09} & \textbf{3.0}  & \textbf{43.83} & \textbf{71.87}& \textbf{81.87} & \textbf{2.0}  \\
%~AW-G-Tikh-CCA+Res(B)+HKSE (lin,lin)                                                                        & {} & {} & {} & {}  & {} & {}& {} & {}  \\
%~AW-G-Tikh-CCA+Res(B)+[HKSE (lin,rbf)||HKSE(rbf,rbf)]                                                                       & {} & {} & {} & {}  & {} & {}& {} & {}  \\ 
%~Mean Vec (vocabPCA+AW-T-SVD-CCA)   & 20.53&46.83&59.84&6.0&27.1&55.87&67.80&4.0  \\                                                                
%~Mean Vec (vocabPCA+AW-\emph{kernel}-T-SVD-CCA)                                                                    &{29.80}&{64.09}&{78.43}&{3.0}     & {39.92} & {71.34}& {83.52} & {2.0}  \\
%~Mean Vec (vocabPCA+AW-\emph{kernel}-G-Tikhonov-CCA)                                                                    & \textbf{30.80} & \textbf{65.16} & \textbf{79.45} & \textbf{3.0}  & \textbf{40.22} & \textbf{71.92}& \textbf{83.40} & \textbf{2.0}  \\
%~Skip thoughts (AW-T-SVD-CCA) &19.89&45.91&58.18&7.0&28.23&54.97&66.80&4.0\\                                                                     
~AW-G-Tikhonov-CCA +VGG+ Skip thoughts&22.18&48.9&60.92&6.0&28.23&56.50&68.36&4.0\\       
~BRNN+VGG \cite{Karpathy}~                                                                            & 15.2 & 37.7& 50.5 & 9.2  & 22.2 & 48.2 & 61.4 & 4.8     \\
                                             %~Skip thoughts +Triplets loss {\cite{skipthoughts} }                                                                       & 25.9 & 60 & 74.6 &NA & {33.8} & {67.7}& {82.1} & {NA} \\
%~GMM                                                                 & 24.7 & 58.7 & 75.6 & 4.0  & 13.3 & 38.3 & 66.1 & 79.2 & 3.0  & 12.1 & 13.9 \\
%~LMM                                                                 & 24.8 & 59.2 & 75.4 & 4.0  & 12.8 & 39.1 & 68.4 & 79.4 & 2.0  & 12.2 & 13.9 \\
%~HGLMM                                                               & 25.1 & 59.7 & 76.5 & 4.0  & 12.7 & 38.7 & 68.4 & 81.0 & 2.0  & 11.3 & 13.7 \\
~CCA+VGG+ Fisher GMM+HGLMM{\cite{Klein} }                                                                               & 25.0 & 52.7 & 66.0 & 5.0  & {35.0} &{62.0} & {73.80} & {3.0}  \\
\hline
Flickr 8K                                                                 &      &      &      &      &      &      &      &            \\
%~Mean Vec  (vocabPCA+ AW-T-SVD-CCA)  &16.52&40.34&54.02&9.0&23.30&50.40&64.10&5.0\\  
~CCA + VGG+Mean Vec    {\cite{Klein} }                                                                  &{19.1}& {45.3}& {60.4}& {7.0}& {22.6} &{48.8}& {61.2}& {6.0}\\                                                               
~AW-G-Tikh-CCA+VGG+HKSE (lin,lin) (Mean Vec)      &18.62&44.82&58.88&7.0&{23.10}&{50.8}&{63.00}&{5.0}\\                                                                 %~Mean Vec  (vocabPCA+ AW-\emph{kernel}-T-SVD-CCA)                                                                   & {12.93} & {34.20} & {47.04} & {12.0} & {18.74} & {42.62} & {56.44} & {8.0}   \\
%~Mean Vec  (vocabPCA+ AW-\emph{kernel}-G-Tikhonov-CCA)                                                                   & \textbf{13.66} & \textbf{35.27} & \textbf{47.92} & \textbf{12.0} & \textbf{19.16} & \textbf{43.22} & \textbf{57.14} & \textbf{8.0}   \\
%~Skip thoughts (AW-T-SVD-CCA)                                                                         &13.78&36.30&49.04&11.0&20.90&45.80&58.90&7.0   \\
~AW-G-Tikh-CCA+ VGG+HKSE (rbf,rbf)         &{19.6}&{45.66}&{58.00}&{7.0}&{25.5}&{53.4}&{67.60}&{5.0}\\   
~AW-G-Tikh-CCA+VGG+[HKSE (lin,rbf)||HKSE(rbf,rbf)]               & {19.36} & {45.34} & {57.08} & {7.0}  & {26.0} & {52.3}& {67.1} & {5.0}  \\  
~AW-G-Tikh-CCA+ResNet+HKSE (lin,lin)                       & {23.08} & {52.18} & {65.5} & {5.0}  & {32.8} & {63.4}& {74.8} & {3.0}  \\
~AW-G-Tikh-CCA+ResNet+[HKSE (lin,rbf)||HKSE(rbf,rbf)]    & \textbf{23.74} & \textbf{52.48} & \textbf{65.72} & \textbf{5.0}  & \textbf{34.6} & \textbf{65.7}& \textbf{75.7} & \textbf{3.0}  \\
%~AW-G-Tikh-CCA+Res(B)+HKSE (lin,lin)                                                                        & {} & {} & {} & {}  & {} & {}& {} & {}  \\
%~AW-G-Tikh-CCA+Res(B)+[HKSE (lin,rbf)||HKSE(rbf,rbf)]                                                                       & {} & {} & {} & {}  & {} & {}& {} & {}  \\ 
%~AW-G-Tikh-CCA  + HKSE(rbf,rbf)  (Ann Pack)       & NA&NA &NA&NA&\textbf{36.1} & \textbf{68.7}& \textbf{79.4} & \textbf{2.0}  \\
~AW-G-Tikhonov-CCA+VGG+Skip thoughts           &17.52&43.76&57.92&7.0&21.70&50.20&63.70&5.0\\                      
~BRNN+VGG  \cite{Karpathy}                                                                           &11.8& 32.1& 44.7& 12.4&  16.5& 40.6& 54.2& 7.6 \\
% ~BRNN  \cite{Karpathy}                                                                           &11.8& 32.1& 44.7& 12.4&  16.5& 40.6& 54.2& 7.6 \\
%~BRNN  \cite{Karpathy}                                                                           &11.8& 32.1& 44.7& 12.4&  16.5& 40.6& 54.2& 7.6 \\
                                     
%~Skip thoughts +Triplets loss {\cite{skipthoughts} }                                                                           &  NA& NA &  NA &  NA &  NA &  NA &  NA & NA \\
%~GMM                                                                 & 10.5 & 28.0 & 39.4 & 17.0 & 62.3 & 17.0 & 38.1 & 49.8 & 11.0 & 52.2 & 58.0 \\
%~LMM                                                                 & 10.5 & 28.0 & 39.7 & 17.0 & 61.3 & 16.4 & 38.1 & 50.4 & 10.0 & 51.2 & 57.0 \\
%~HGLMM                                                               & 11.1 & 28.7 & 40.2 & 16.0 & 59.0 & 16.7 & 38.4 & 51.0 & 10.0 & 47.8 & 56.2 \\
~CCA+VGG+Fisher GMM+HGLMM  {\cite{Klein} }&{21.2}& {50.0}& {64.8}& {5.0}&{31.0}&{59.3}& {73.7}& {4.0}\\
\hline
\end{tabular}}
\caption{Mean results of the test splits on the Flickr 30 K and 8K benchmarks (in \%).}
\label{tab:flickr}
\vspace{-0.2in}
\end{table*}

\section{Conclusion}
%\vspace{-0.1 in}
In this paper we  showed that  the query generation approach in information retrieval \cite{Relevance},  can be applied to bidirectional retrieval, where we map the search space to the query space via an asymmetric weighting of regularized CCA. Asymmetric weighting improves the performance of the bidirectional retrieval tasks.
We also presented a computationally efficient cross validation for regularized CCA, that allows for a better model selection and hence contributes also in improving the retrieval performance. 
Finally we presented the Hierarchical Kernel Sentence Embedding that is of independent interest , and that generalizes the mean word2vec as a mean for aggregation of word embeddings and outperforms off the shelf sentence embeddings in bidirectional retrieval.

 \bibliographystyle{unsrt}
 \bibliography{simplex}
\begin{appendix}
\onecolumn
\begin{center}
\large{\textbf{Supplementary Material \\ Asymmetric Regularized CCA and Hierarchical Kernel Sentence Embedding for Image \& Text Retrieval}}
\end{center}

\section{Appendix: Query Generation Versus Search Generation} \label{ap:Laff}
 \cite{Relevance} propose to use a binary random variable $r$ that denotes relevance between a query and a search item, $r=1$ if there is a match and $0$ otherwise. In order to rank search items  \cite{Relevance} propose the  use  of the log-odds ratio:
 $$\log \frac{\mathbb{P}(r=1|q,s)}{\mathbb{P}(r=0|q,s)}$$
Using the search generation approach this ratio is equivalent to \cite{Relevance}:
$$\log \frac{\mathbb{P}(s|q,r=1)}{\mathbb{P}(s|q,r=0)},$$
hence this approach needs both positive and negative pairs of search and query items (triplet losses are instances of this approach).
On the other hand using the query generation approach (under mild assumptions) this ratio is equivalent to \cite{Relevance}:
$$\log \mathbb{P}(q|s,r=1)$$
The query generation approach models relevance in an implicit way and does not need negative samples.
\section{Appendix: Proofs} \label{ap:proofs}
\begin{proof}[Proof of Lemma 1] We give the proof here as it will be useful in the the development of the full regularization path of CCA with truncated SVD regularization of the covariances $C_{XX}$ and $C_{YY}$.
Let $P_x= \Sigma_xV_x^{\top}U \in \mathbb{R}^{m_x\times k } \text{ equivalently } U =V_x \Sigma_x^{-1}P_x $
and $P_y= \Sigma_yV_y^{\top}V \in \mathbb{R}^{m_y\times k } \text{ equivalently } V=V_y \Sigma_y^{-1}P_y $.
Hence we obtain by this change of variable:
$U^{\top}X^{\top}YV= P^{\top}_x\Sigma^{-1}_xV^{\top}_x V_x\Sigma_x U^{\top}_xU_y \Sigma_yV_y^{\top}V_y\Sigma_y^{-1}P_y= P_x^{\top} \left(U^{\top}_xU_y\right) P_y.$
Similarly:
$U^{\top}X^{\top}X U= P^{\top}_{x}P_x ~~~ V^{\top}Y^{\top}Y V= P^{\top}_{y}P_y$.
Hence replacing $U,V$ with $P_x,P_y$ we have:
$$\max_{P^{\top}_xP_x=I, P^{\top}_yP_y=I} Tr(P_x^{\top}\left(U^{\top}_xU_y\right)P_y),$$
This is solved by an SVD of $ T=U^{\top}_xU_y$.
$[P_x, \Sigma,P_y]=SVD(T)$, $(P_{x}\in \mathbb{R}^{m_x\times k}, \Sigma \in \mathbb{R}^{k\times k},P_{y}\in \mathbb{R}^{m_y\times k}$).
where $k=\min(m_x,m_y)$,
and finally we have $U =V_x \Sigma_x^{-1}P_x $, $ V=V_y \Sigma_y^{-1}P_y $. 
\end{proof}
\begin{proof}[Proof of Theorem 1] \label{ap:proofReg} 
~\\
1) Tikhonov:
\begin{equation}
\max_{U^{\top}(X^{\top}X+\gamma_x I_{m_x})U=I, V^{\top}(Y^{\top}Y+\gamma_y I_{m_y})V=I }Tr(U^{\top}X^{\top}YV).
\label{eq:rtikh1}
\end{equation}
$[U_x,\Sigma_x,V_x]=SVD(X)~~U_x \in \mathbb{R}^{n\times m_x}, \Sigma_x \in \mathbb{R}^{m_x \times m_x} , V_x \in \mathbb{R}^{m_x \times m_x} ~~ X= U_x \Sigma_x V^{\top}_x.$
$[U_y,\Sigma_y,V_y]=SVD(Y)~~ U_y \in \mathbb{R}^{n\times m_y}, \Sigma_x \in \mathbb{R}^{m_y \times m_y} , V_x \in \mathbb{R}^{m_y \times m_y} ~~Y= U_y \Sigma_y V^{\top}_y.$
$$X^{\top}X+\gamma_x I = V_{x} \left(\Sigma^2_x+\gamma_x I\right) V^{\top}_{x}. $$
$$Y^{\top}Y+\gamma_y I =V_{y} \left( \Sigma^2_y+\gamma_y I\right)V^{\top}_{y}.$$
Let $P_x= \sqrt{\Sigma^2_x+\gamma_x I} V_x^{\top}U \in \mathbb{R}^{m_x\times k } \text{ equivalently } U =V_x  \left(\Sigma^2_x+\gamma_x I\right)^{-\frac{1}{2}}P_x $.
Let $P_y= \sqrt{\Sigma^2_y+\gamma_y I} V_y^{\top}V \in \mathbb{R}^{m_y\times k } \text{ equivalently } V =V_y  \left(\Sigma^2_y+\gamma_y I\right)^{-\frac{1}{2}}P_y $.
Hence we obtain by this change of variable for  the objective in \eqref{eq:rtikh1}:
\begin{eqnarray*}
U^{\top}X^{\top}YV&=& P^{\top}_x   \left(\Sigma^2_x+\gamma_x I\right)^{-\frac{1}{2}}V^{\top}_x V_x\Sigma_x U^{\top}_xU_y \Sigma_yV_y^{\top}V_y  \left(\Sigma^2_y+\gamma_y I\right)^{-\frac{1}{2}}P_y\\
&=& P_x^{\top}\left(\Sigma^2_x+\gamma_x I\right)^{-\frac{1}{2}}\Sigma_x \left(U^{\top}_xU_y\right)  \Sigma_y  \left(\Sigma^2_y+\gamma_y I\right)^{-\frac{1}{2}}P_y.
\end{eqnarray*}
Let $$T_{\gamma_x,\gamma_y}= \left(\Sigma^2_x+\gamma_x I\right)^{-\frac{1}{2}}\Sigma_x \left(U^{\top}_xU_y\right)  \Sigma_y  \left(\Sigma^2_y+\gamma_y I\right)^{-\frac{1}{2}},$$
hence we have:
\begin{equation*}
U^{\top}X^{\top}YV = P_{x}^{\top}T_{\gamma_x,\gamma_y}P_y.
\end{equation*}
On the other hand, plugging this change of variable in the constraints of \eqref{eq:rtikh1} we obtain:
$$U^{\top}(X^{\top}X+\gamma_x I)U =U^{\top} V_{x} \left(\Sigma^2_x+\gamma_x I\right) V^{\top}_{x}U= P^{\top}_{x}P_{x} =I $$
$$V^{\top}(Y^{\top}Y+\gamma_y I)V =V^{\top} V_{y} \left(\Sigma^2_y+\gamma_y I\right) V^{\top}_{y}V= P^{\top}_{y}P_{y} =I $$
Therefore using this change of variable, problem \eqref{eq:rtikh1} becomes:
\begin{equation}
\max_{P_{x}^{\top}P_x=I, P_{y}^{\top}P_y=I}Tr(P_{x}^{\top}T_{\gamma_x,\gamma_y}P_y),
\end{equation}
this is the variational formulation of the SVD of $T_{\gamma_x,\gamma_y}$.
Hence we obtain that:
$$[P_x,\Sigma,P_y]=SVD(T_{\gamma_x,\gamma_y}),$$
$$U =V_x  \left(\Sigma^2_x+\gamma_x I\right)^{-\frac{1}{2}}P_x ,$$
$$V =V_y  \left(\Sigma^2_y+\gamma_y I\right)^{-\frac{1}{2}}P_y.$$
2) T-SVD:
\begin{equation}
\max_{U^{\top}X^{\top}_{k_x}X_{k_x}U=I, V^{\top}Y_{k_y}^{\top}Y_{k_y}V=I }Tr(U^{\top}X^{\top}YV)
\label{eq:rT-SVD1}
\end{equation}
Let $$P_{x}= \Sigma_{k_x}V_{k_x}^{\top}U \in \mathbb{R}^{k_x\times k},~~ \text{ equivalently } U =V_{k_x} \Sigma_{k_x}^{-1}P_{x} \in \mathbb{R}^{m_x\times k}, k=\min(k_x,k_y) $$
 $$P_{y}= \Sigma_{k_y}V_{k_y}^{\top}V \in \mathbb{R}^{k_y\times k},~~ \text{ equivalently } V=V_{k_y} \Sigma_{k_y}^{-1}P_{y} \in \mathbb{R}^{m_y\times k},  k=\min(k_x,k_y) $$
Hence we obtain by this change of variable, in the objective of \eqref{eq:rT-SVD1}:
$$U^{\top}X^{\top}YV= P^{\top}_{x}\Sigma^{-1}_{k_x}V^{\top}_{k_x} V_x\Sigma_x U^{\top}_xU_y \Sigma_yV_y^{\top}V_{K_y}\Sigma_{k_y}^{-1}P_{y},$$
Now we turn to:
\begin{eqnarray*}
\Sigma^{-1}_{k_x}(V^{\top}_{k_x} V_x)\Sigma_x U^{\top}_x &=&  \Sigma^{-1}_{k_x} [I_{k_x\times k_x} 0_{k_{x}\times(m_x-k_x) } ] \Sigma_x U^{\top}_x\\
&=& \Sigma^{-1}_{k_x} [\Sigma_{k_x} 0_{k_{x}\times(m_{x}-k_{x})} ] U^{\top}_x\\
&=&[I_{k_x\times k_x}  0_{k_{x}\times(m_x-k_x)}] U^{\top}_x \\
&=&U^{\top}_{k_{x}}
\end{eqnarray*}
Hence we keep the first $k_x$ columns of $U_x$, that is $U_{k_x}$. The same argument hold for $U_{k_y}$. It follows that using truncated SVD, we have:
$$U^{\top}X^{\top}YV = P_{x}^{\top} U^{\top}_{k_x}U_{k_y} P_{y}.$$
Let $$T_{k_x,k_y}=U^{\top}_{k_x}U_{k_y}, $$
then using this change of variable, the objective in \eqref{eq:rT-SVD1} becomes:
$$U^{\top}X^{\top}YV = P_{x}^{\top} T_{k_x,k_y} P_{y}.$$
Now turning to the constraints of  \eqref{eq:rT-SVD1}, using this change of variable we obtain:
$$ U^{\top}X^{\top}_{k_x}X_{k_x}U= U^{\top} V_{k_x}\Sigma^2_{k_x}V_{x}^{\top}U=P_x^{\top}P_{x}=I,$$
$$ V^{\top}Y^{\top}_{k_y}Y_{k_y}V= V^{\top} V_{k_y}\Sigma^2_{k_y}V_{y}^{\top}V=P_y^{\top}P_{y}=I.$$
Therefore using this change of variable, problem \eqref{eq:rT-SVD1} becomes:
\begin{equation}
\max_{P_{x}^{\top}P_x=I, P_{y}^{\top}P_y=I}Tr(P_{x}^{\top}T_{k_x,k_y}P_y),
\end{equation}
this is the variational formulation of the SVD of $T_{k_x,k_y}$.
Hence truncated SVD-CCA can be solved finding: 
$$[P_{k_x},\Sigma^{k_x,k_y},P_{k_y}]=SVD(T_{k_x,k_y}).$$
 Turning now to $U^{\top}_{k_x}U_{k_y}$ this can be computed efficiently by precomputing $T=U^{\top}_xU_y \in \mathbb{R}^{m_x\times m_y}$ and then extracting the submatrix consisting of $k_x$ rows and $k_y $ columns.
we return therefore $U =V_{k_x} \Sigma_{k_x}^{-1}P_{x} $, $ V=V_{k_y} \Sigma_{k_y}^{-1}P_{y} $.
\end{proof}

\section{Algorithms} \label{ap:algorithms}

%\setlength{\textfloatsep}{-0.0004em}
%\begin{algorithm}[t!]
%\begin{algorithmic}[1]
%  \State {\bf Input:} dataset $S \in \mathcal{R}^{N \times d}$, initial
%centroids $C \in \mathcal{R}^{k \times d}$.
%  \State {\bf Output:} converged centroids $C$.
%  \medskip
%  \State $\mathscr{T} \gets$ a tree built on $S$
%  \WHILE{centroids $C$ not converged}
%    \State \COMMENT{Remove nodes in the tree if possible.}
%    \State $\mathscr{T} \gets \mathtt{CoalesceNodes(}\mathscr{T}\mathtt{)}$
%    \State $\mathscr{T}_c \gets$ a tree built on $C$
%    \medskip
%    \State \COMMENT{Call dual-tree algorithm.}
%    \State Perform a dual-tree recursion with $\mathscr{T}$, $\mathscr{T}_c$,
%\texttt{BaseCase()}, and \texttt{Score()}.
%    \medskip
%    \State \COMMENT{Restore the tree to its non-coalesced form.}
%    \State $\mathscr{T} \gets \mathtt{DecoalesceNodes(\mathscr{T})}$
%    \medskip
%    \State \COMMENT{Update centroids and bounding information.}
%    \State $C \gets \mathtt{UpdateCentroids(}\mathscr{T}\mathtt{)}$
%    \State $\mathscr{T} \gets \mathtt{UpdateTree(}\mathscr{T}\mathtt{)}$
%  \ENDWHILE
%  \State {\bf return} $C$
%\end{algorithmic}
%\caption{High-level outline of dual-tree $k$-means.}
%\label{alg:high_level}
%\end{algorithm}
%
%\twocolumn
\begin{algorithm}[H]
%\algblock[Bjorck Golub Algorithm]{Start}{End}
 \begin{algorithmic}[1]
%{ }{$X \in \mathbb{R}^{n \times m_x},Y \in \mathbb{R}^{n \times m_y}$}
 \State  $[U_x,\Sigma_x,V_x]= SVD(X)$.
 \State  $[U_y,\Sigma_y,V_y]= SVD(Y)$.
 \State$T = U^{\top}_{x}U_y \in \mathbb{R}^{m_x\times m_y}$ 
 \State $[P_x,\Sigma,P_y]= SVD(T)$
 \State $U =V_x \Sigma_x^{-1}P_x $
 \State $V=V_y \Sigma_y^{-1}P_y $
  \State {return} $U,V$ % \EndProcedure
 \end{algorithmic}
 \caption{Bjorck Golub }
 \label{ALG:BjorckGolub}
\end{algorithm}
\vspace{-0.005 in}
\begin{algorithm}[H]
 \begin{algorithmic}[1]
 %\Procedure{Tikhonov Regularized CCA}{$X \in \mathbb{R}^{n \times m_x},Y \in \mathbb{R}^{n \times m_y}$}
 \State  $[U_x,\Sigma_x,V_x]= SVD(X)$.
  \State  $[U_y,\Sigma_y,V_y]= SVD(Y)$.
  \State $T_0 = \Sigma_x(U^{\top}_{x}U_y )\Sigma_y\in \mathbb{R}^{m_x\times m_y}$ 
 \For {$\gamma_x \in \{\sigma^{2}_{x,1},\dots \sigma^{2}_{x,m_{x}}\}$}\State \Comment {The set of singular values squared or a subsampled grid}
  \For {$\gamma_y \in  \{\sigma^{2}_{y,1},\dots \sigma^{2}_{y,m_{y}}\}$} %\Comment{$\Delta_y$ is the step size in this grid}
  \State $T= \left(\Sigma^2_x+\gamma_x I\right)^{-\frac{1}{2}}T_0 \left(\Sigma^2_y+\gamma_y I\right)^{-\frac{1}{2}}$ 
  \State $[P_x,\Sigma^{\gamma_x,\gamma_y},P_y]= SVD(T)$
  \State $U =V_x  \left(\Sigma^2_x+\gamma_x I\right)^{-\frac{1}{2}}P_x  $
  \State $V=V_y  \left(\Sigma^2_y+\gamma_y I\right)^{-\frac{1}{2}}P_y$
   \State Compute  performance using $U,V,\Sigma^{\gamma_x,\gamma_y}$ on a validation Set. 
  \State (Bidirectional Retrieval is done using  a sorted list of the scores of AW-CCA) 
 \EndFor
 \EndFor
 \State {return} $U,V,\Sigma^{\gamma_x,\gamma_y},\gamma_x,\gamma_y$ with best validation performance for each task. 
 %\EndProcedure
 \end{algorithmic}
 \caption{Tikhonov Regularized CCA (X,Y)}
 \label{ALG:Tikhonovcca}
\end{algorithm}
\begin{algorithm}[H]
 \begin{algorithmic}[1]
% \Procedure{Cross validation truncated SVD -CCA }{$X \in \mathbb{R}^{n \times m_x},Y \in \mathbb{R}^{n \times m_y}$}
  \State $[U_x,\Sigma_x,V_x]= SVD(X)$.
  \State $[U_y,\Sigma_y,V_y]= SVD(Y)$.
   \State$T = U^{\top}_{x}U_y \in \mathbb{R}^{m_x\times m_y}$ 
  \State$W^x = V_x \Sigma_x^{-1} \in \mathbb{R}^{m_x\times m_x}$
  \State $W^y= V_y \Sigma_{y}^{-1}\in \mathbb{R}^{m_y\times m_y} $
 \For {$k_x \in [m_x]$}\State \Comment{$[m_x]=\{1\dots m_x\}$ or a subsampled grid.}
  \For{$k_y \in [m_y]$}
  \State $[P_{x},\Sigma^{k_x,k_y},P_{y}]= SVD(T_{1:k_x,1:k_y})$ \State \Comment{$T_{1:k_x,1:k_y}$: extracts the first $k_x$ rows ,and the first $k_y$ columns of $T$.}
  \State $U =W^x_{:,1:k_x}P_{x} $
  \State $V=W^y_{:,1:k_y} P_{y} $
  \State Compute  performance using $U,V,\Sigma^{k_x,k_y}$ on a validation Set. 
  \State(Bidirectional Retrieval is done using  a sorted list of the scores of  AW-CCA.)  
  \EndFor
 \EndFor
  \State {return} $U,V,\Sigma^{k_x,k_y},k_x,k_y$ with best validation performance for each task.
 %\EndProcedure
 \end{algorithmic}
 \caption{Truncated SVD CCA (X,Y) }
 \label{ALG:TruncatedSVDcca}
\end{algorithm}

\begin{algorithm}[h]
 \begin{algorithmic}[1]
% \Procedure{Guided Tikhonov Validation by T-SVD }{$X \in \mathbb{R}^{n \times m_x},Y \in \mathbb{R}^{n \times m_y}$}
   \State $(k^*_x,k^*_y)= \text{ Truncated SVD CCA }(X,Y)$
  \State $(\gamma_x,\gamma_y)=(\sigma^2_{x,k^*_x},\sigma^2_{y,k^*_y} )$
 %\State $T= \left(\Sigma^2_x+\gamma_x I\right)^{-\frac{1}{2}}\Sigma_x(U^{\top}_{x}U_y )\Sigma_y \left(\Sigma^2_y+\gamma_y I\right)^{-\frac{1}{2}}$ 
  \State $[P_x,\Sigma^{\gamma_x,\gamma_y},P_y]= SVD(T_{\gamma_x,\gamma_y})$
  \State $U =V_x  \left(\Sigma^2_x+\gamma_x I\right)^{-\frac{1}{2}}P_x $, 
  \State $V=V_y  \left(\Sigma^2_y+\gamma_y I\right)^{-\frac{1}{2}}P_y$
  \State {return} $U,V,\Sigma^{\gamma_x,\gamma_y}$.
% \EndProcedure
 \end{algorithmic}
 \caption{Guided Tikhonov Validation by T-SVD(X,Y) }
 \label{ALG:GTikhonovcca}
\end{algorithm}

\section{Query Generation versus Search Generation versus CCA}\label{sec:tsne}
When we retrieve images matching a text query. Three methods are possible:
\begin{enumerate}
\item CCA: We cross validate from image search using the cosine $\frac{\scalT{U^{\top}x}{V^{\top}y}}{\nor{U^{\top}x}\nor{V^{\top}y}}$. Then we retrieve for the test query images with largest cosine.
\item Query Generation Approach: Where we weigh the canonical weights of search space (images) that is $U$, we cross-validate using the score $\frac{\scalT{\Sigma U^{\top}x}{V^{\top}y}}{\nor{\Sigma U^{\top}x}\nor{V^{\top}y}}$. We retrieve for the query $y$ images having largest score as defined here.
\item Search Generation Approach: Where we weigh the canonical weights of query space (text ) that is $V$,  we cross-validate using the score $\frac{\scalT{ U^{\top}x}{\Sigma V^{\top}y}}{\nor{U^{\top}x}\nor{\Sigma V^{\top}y}}$
\end{enumerate} 

To illustrate this we consider the following text query:" A kitchen with two windows and two metals sinks". We embed this sentence using HKSE(rbf,rbf) get a vector $y$.
We do three experiments:
\begin{enumerate}
\item We embed the image test set with $\Sigma U^{\top}$, and the query with $V^{\top}$, this corresponds to the Query generation approach. Then we embed the query and the nearest neighbor images in 2D using the t-SNE plot. We represent the text query with the ground truth image with a blue box around it.
\item We embed the image test set with $ U^{\top}$, and the query with $V^{\top}$, this corresponds to CCA. Then we embed the query and the nearest neighbor images in 2D using the t-SNE plot. We represent the text query with the ground truth image with a blue box around it.
\item We embed the image test set with $ U^{\top}$, and the query with $\Sigma V^{\top}$, this corresponds to the search generation approach. Then we embed the query and the nearest neighbor images in 2D using the t-SNE plot. We represent the text query with the ground truth  image with a blue box around it.
\end{enumerate}

We see in Figure \ref{fig:tsne} that the space is better organized in the query generation approach.
   \begin{figure}[ht]
%\hspace*{-0.3in}
  \begin{subfigure}[t]{0.6\textwidth}     
       %\centering   
        \includegraphics[scale=0.3]{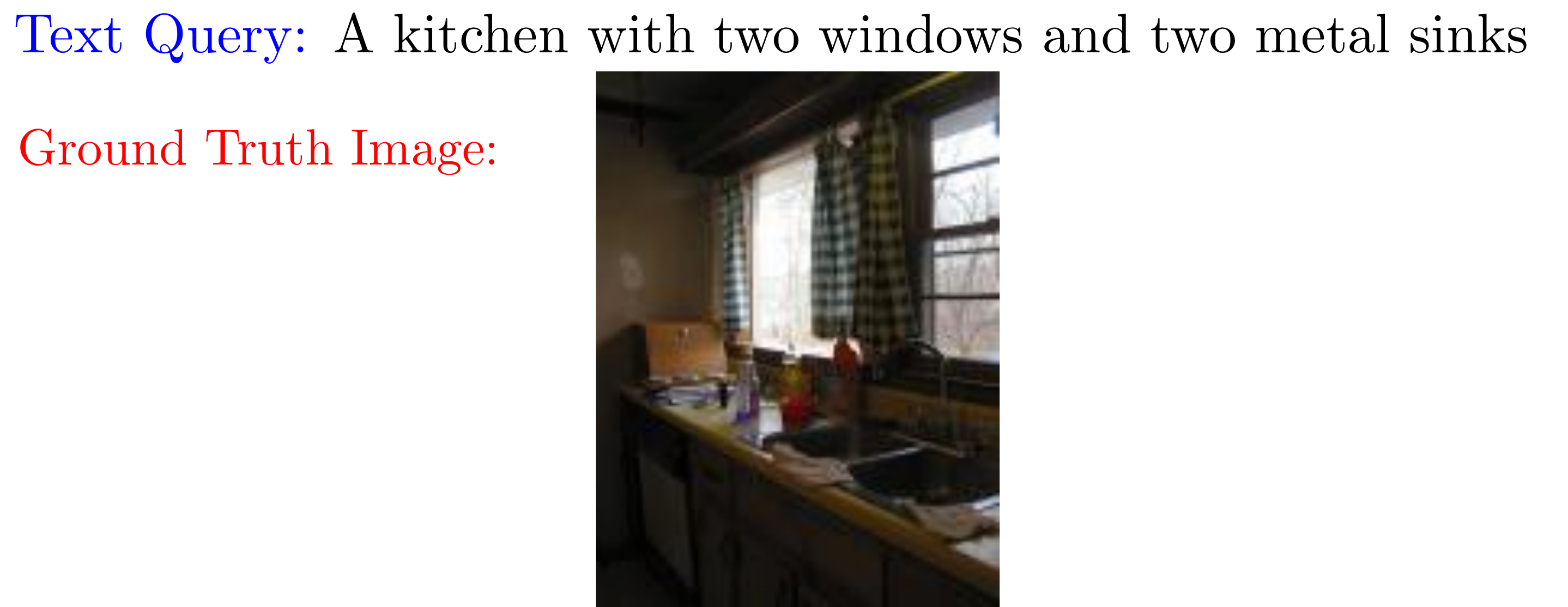}
        % \caption{T-SVD Cross Validation.} %\\  \textcolor{red}{TO BE REPLACED HERE}}
    \end{subfigure}
~
\hspace*{-0.8in}
    \begin{subfigure}[t]{0.6\textwidth} 
      % \centering
            \includegraphics[scale=0.3]{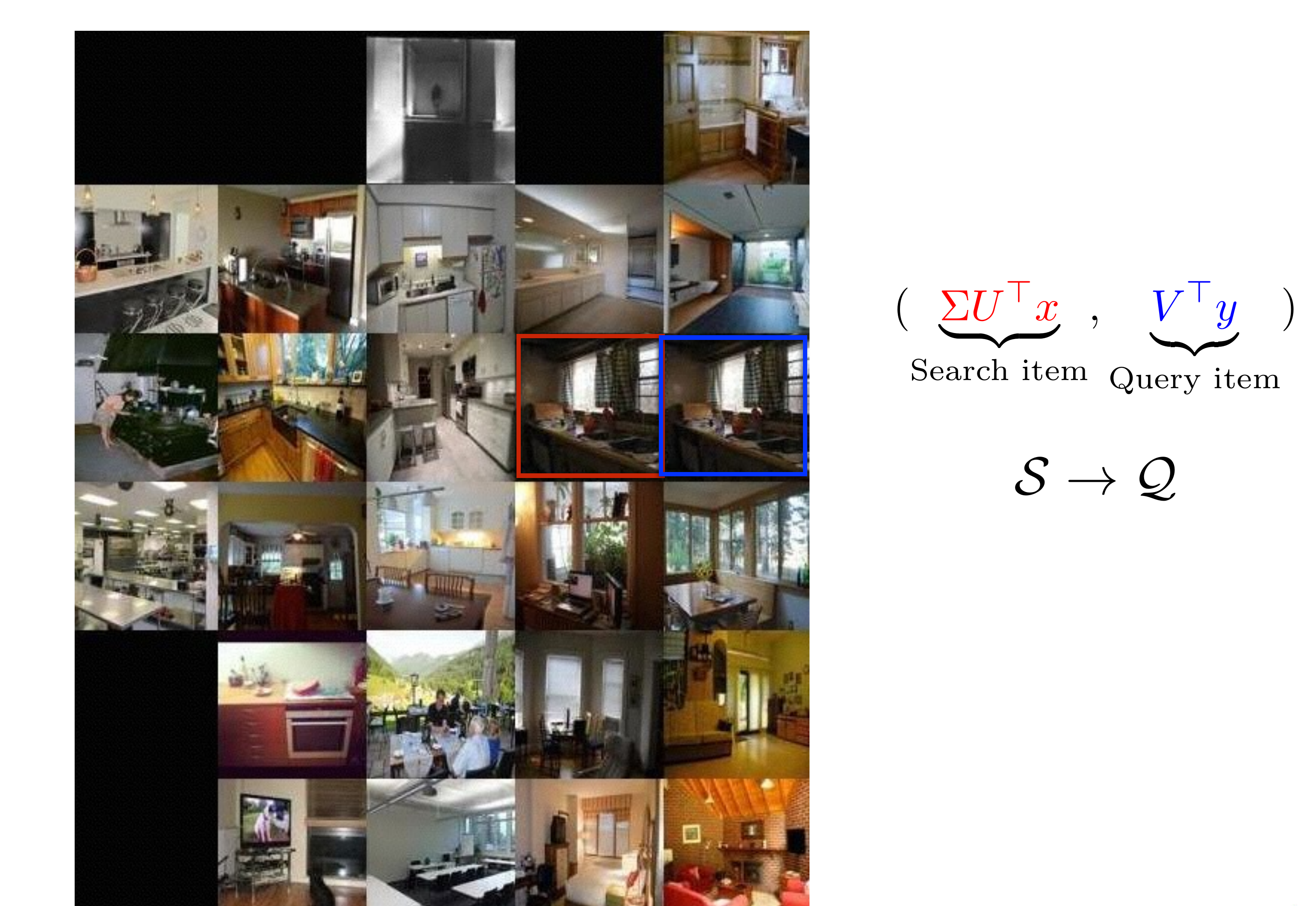}
         
             \caption{Query Generation Approach.}.% \\\textcolor{red}{TO BE REPLACED HERE}}
    \end{subfigure}
  ~  
%\hspace*{-0.8 in }
    \begin{subfigure}[t]{0.6\textwidth} 
      % \centering
            \includegraphics[scale=0.3]{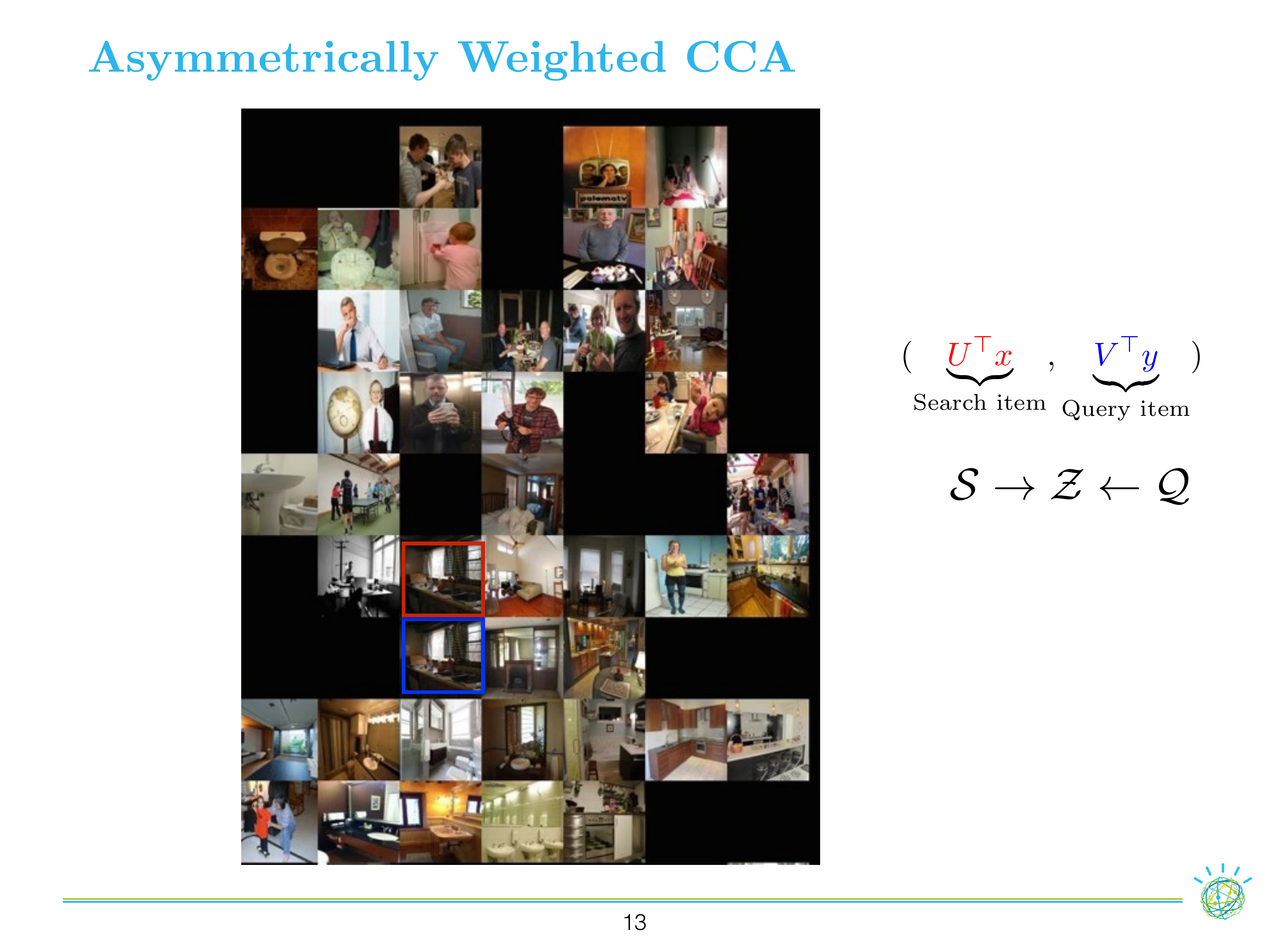}
         
             \caption{CCA.}.% \\\textcolor{red}{TO BE REPLACED HERE}}
    \end{subfigure}
  ~  
    \hspace*{-0.7in}
    \begin{subfigure}[t]{0.6\textwidth} 
      % \centering
            \includegraphics[scale=0.3]{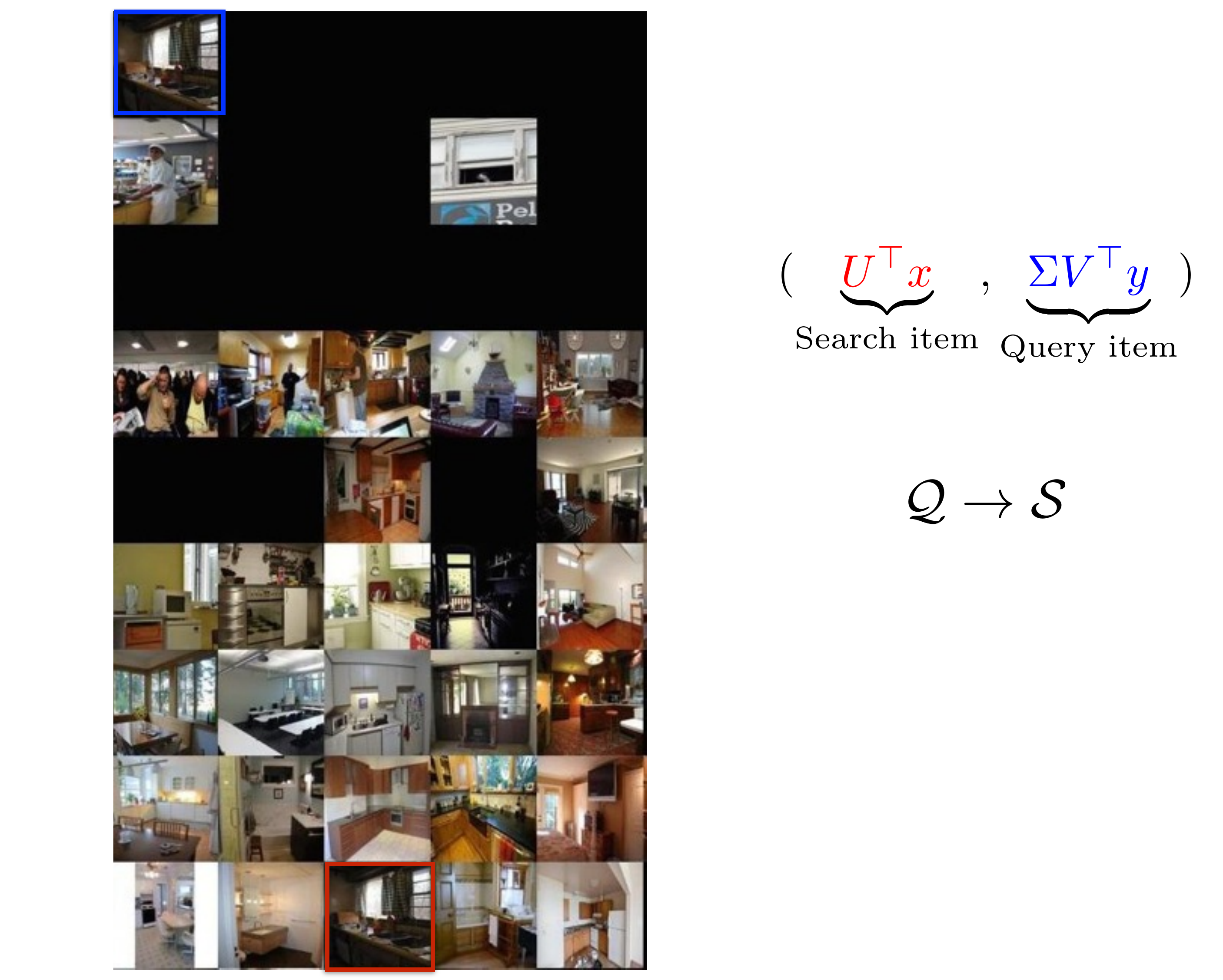}
         
             \caption{Search Generation Approach.}.% \\\textcolor{red}{TO BE REPLACED HERE}}
    \end{subfigure}

         \caption{TSNE plots of different Embeddings.}
             \label{fig:tsne}
  \end{figure}

\section{HKSE } 
\subsection{Proof of Proposition 1}\label{app:prop1}
\begin{proposition} [HKSE approximation] Let $\mathcal{A}$ be the vocabulary embedded in a vector space. Let $|\mathcal{A}|$ be the size of the vocabulary. Let $s$ be the maximum sentence length. Let $\hat{\mu}_{n_1}(\rho_1)= \frac{1}{n_1}\sum_{i=1}^{n_1}\Phi_{\gamma,d}(a_i)$, and $\hat{\mu}_{n_2}(\rho_2)= \frac{1}{n_2}\sum_{i=1}^{n_2}\Phi_{\gamma,d}(b_i)$, on two different sentences $\rho_1,\rho_2$, with words $\{a_1,\dots a_{n_1}\}$ and $\{b_1,\dots b_{n_2}\}$ respectively. Let $\epsilon,\delta >0$,for $m\geq  \frac{1}{2\delta^2}\log(|\mathcal{A}|^2/\epsilon)$ and $m'\geq  \frac{1}{2\delta^2}\log(|\mathcal{A}|^{2s}/\epsilon)$ we have with probability $1-2\epsilon
$:
$$\exp \frac{-3\eta \delta}{2} K(\rho_1,\rho_2) - \delta  \leq \scalT{\Phi_{\eta,m}\left(\hat{\mu}_{n_1}(\rho_1)\right)}{\Phi_{\eta,m}\left(\hat{\mu}_{n_2}(\rho_2)\right)}\leq \exp \frac{3\eta \delta}{2} K(\rho_1,\rho_2)  +\delta ,$$
Informally for 2 layers the error of estimation is multiplicative and additive , inner word level dimension scales logarithmically with the vocabulary size $m =O(\log(|\mathcal{A}|))$, and the outter dimension scales linearly with the maximum sentence length $s$ an logarithmically with the vocabulary size  $m'=O(s\log(|\mathcal{A}|) )$.

\end{proposition}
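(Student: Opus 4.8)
The plan is to split the total error $\lvert \hat{K}(\rho_1,\rho_2) - K(\rho_1,\rho_2)\rvert$ into two independent sources and control each by a Hoeffding bound followed by a union bound over a \emph{finite} index set. The inner (word-level) random Fourier approximation enters only through the empirical squared MMD $\nor{\hat{\mu}_{n_1}(\rho_1)-\hat{\mu}_{n_2}(\rho_2)}^2_{\mathbb{R}^m}$, and its error will propagate \emph{multiplicatively} through the outer exponential kernel, producing the factor $c(\eta,\delta)=\exp(3\eta\delta/2)$. The outer (sentence-level) random Fourier approximation of $k_{\eta,m}$ contributes the \emph{additive} $\delta$. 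The two probability-$\epsilon$ failure events will combine by a final union bound into the stated $1-2\epsilon$.

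\textbf{First layer.} For fixed words $a,b\in\mathcal{A}$, the estimate $\hat{k}(a,b)=\scalT{\Phi_{\gamma,d}(a)}{\Phi_{\gamma,d}(b)}$ is an average of $m$ i.i.d.\ bounded terms whose mean is $k_{\gamma,d}(a,b)$, so Hoeffding gives $\mathbb{P}(\lvert \hat{k}(a,b)-k_{\gamma,d}(a,b)\rvert>\delta)\le 2e^{-2m\delta^2}$. Since there are at most $\lvert\mathcal{A}\rvert^2$ distinct word pairs, a union bound with $m\ge\frac{1}{2\delta^2}\log(\lvert\mathcal{A}\rvert^2/\epsilon)$ forces the event $\mathcal{E}_1=\{\lvert\hat{k}(a,b)-k_{\gamma,d}(a,b)\rvert\le\delta \text{ for all } a,b\}$ to hold with probability at least $1-\epsilon$. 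On $\mathcal{E}_1$, writing both the exact and the empirical squared MMD as the same signed average of pairwise kernels (cf.\ the expression for $\Delta$), the difference
$$\nor{\hat{\mu}_{n_1}(\rho_1)-\hat{\mu}_{n_2}(\rho_2)}^2_{\mathbb{R}^m}-\nor{\mu_{n_1}(\rho_1)-\mu_{n_2}(\rho_2)}^2_{\mathcal{H}_k}$$
is a convex-type combination of the per-pair errors, hence bounded in absolute value by a small constant multiple of $\delta$ (here $3\delta$, from the two self-similarity blocks and the cross block).

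\textbf{Second layer.} I condition on the inner features, so that each $\hat{\mu}_n(\rho)$ is now a fixed vector in $\mathbb{R}^m$; the key observation is that there are at most $\lvert\mathcal{A}\rvert^{s}$ sentences of length at most $s$, hence at most $\lvert\mathcal{A}\rvert^{s}$ realizable embeddings and at most $\lvert\mathcal{A}\rvert^{2s}$ pairs. For fixed $u,v$, $\scalT{\Phi_{\eta,m}(u)}{\Phi_{\eta,m}(v)}$ is again an average of $m'$ bounded terms with mean $k_{\eta,m}(u,v)$, so Hoeffding plus a union bound over the $\lvert\mathcal{A}\rvert^{2s}$ pairs, with $m'\ge\frac{1}{2\delta^2}\log(\lvert\mathcal{A}\rvert^{2s}/\epsilon)$, yields a (conditional) probability-$(1-\epsilon)$ event $\mathcal{E}_2$ on which $\lvert\hat{K}(\rho_1,\rho_2)-k_{\eta,m}(\hat{\mu}_{n_1},\hat{\mu}_{n_2})\rvert\le\delta$ for all sentence pairs. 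On $\mathcal{E}_1\cap\mathcal{E}_2$ (probability at least $1-2\epsilon$), since $k_{\eta,m}(\hat{\mu}_{n_1},\hat{\mu}_{n_2})=\exp(-\tfrac{\eta}{2}\nor{\hat{\mu}_{n_1}-\hat{\mu}_{n_2}}^2)$ and its argument differs from $\nor{\mu_{n_1}-\mu_{n_2}}^2$ by at most $3\delta$, the outer value lies between $\exp(-\tfrac{3\eta\delta}{2})K$ and $\exp(\tfrac{3\eta\delta}{2})K$; absorbing the outer $\pm\delta$ gives exactly $\tfrac{1}{c}K-\delta\le\hat{K}\le cK+\delta$ with $c=\exp(3\eta\delta/2)$.

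\textbf{Main obstacle.} The delicate point is the composition of two random-feature layers with dependent randomness: the outer map acts on the random vectors $\hat{\mu}_n(\rho)$ produced by the inner layer, so a naive uniform bound over a continuum of possible embeddings would be hopeless. The resolution is to condition on the inner layer and to exploit that the finite vocabulary and bounded sentence length make the set of realizable $\hat{\mu}_n(\rho)$ finite of size $\le\lvert\mathcal{A}\rvert^{s}$; this is precisely what keeps $m'$ only logarithmic in $\lvert\mathcal{A}\rvert$ and linear in $s$. A secondary technical care is tracking how the additive inner error is converted into a multiplicative factor by the outer exponential, which is the origin of the constant in $c(\eta,\delta)$.
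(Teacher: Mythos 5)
Your proposal is correct and follows essentially the same route as the paper's own proof: Hoeffding plus a union bound over the $|\mathcal{A}|^2$ word pairs for the inner layer (giving the $m$ bound), Hoeffding plus a union bound over the at most $|\mathcal{A}|^{2s}$ sentence pairs for the outer layer (giving the $m'$ bound), the $3\delta$ perturbation of the empirical squared MMD converted into the multiplicative factor $\exp(\pm 3\eta\delta/2)$ via the identity $k_{\eta,m}(\hat{\mu}_{n_1},\hat{\mu}_{n_2})=K(\rho_1,\rho_2)\exp\left(-\frac{\eta}{2}\left(\nor{\hat{\mu}_{n_1}-\hat{\mu}_{n_2}}^2-\Delta\right)\right)$, and a final union bound yielding $1-2\epsilon$. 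Your explicit conditioning on the inner random features (exploiting finiteness of the set of realizable sentence embeddings) is a useful clarification of a dependence issue the paper leaves implicit, but it is the same decomposition, not a different argument.
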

\begin{proof} 
Let $\mathcal{A}$ be the vocabulary embedded in a vector space. Let $|\mathcal{A}|$ be the size of the vocabulary. We know from Rahimi and Rech that : 
$$\mathbb{E}\scalT{\Phi_{\gamma,d}(a)}{\Phi_{\gamma,d}(b)}=k_{\gamma,d}(a,b) \text{ forall } a, b \in \mathcal{A}$$
Hence applying Hoeffeding inequality and a union bound on all words in $\mathcal{A}$, we obtain that, the following holds with a probability $1-\epsilon$ for all $a,b \in \mathcal{A}$:
$$k_{\gamma,d}(a,b) -\delta \leq\scalT{\Phi_{\gamma,d}(a)}{\Phi_{\gamma,d}(b)}\leq k_{\gamma,d}(a,b) +\delta,$$
for $m\geq \frac{1}{2\delta^2}\log(|\mathcal{A}|^2/\epsilon) $.\\
Let $\hat{\mu}_{n_1}(\rho_1)= \frac{1}{n_1}\sum_{i=1}^{n_1}\Phi_{\gamma,d}(a_i)$, and $\hat{\mu}_{n_2}(\rho_2)= \frac{1}{n_2}\sum_{i=1}^{n_2}\Phi_{\gamma,d}(b_i)$, on two different sentences.
Assume $s$ is the maximum sentence length, a  bound on the total number of sentence is $|\mathcal{A}|^s$
Applying again a hoeffeding bound, and a union bound on all sentences , we have with a probability $1-\epsilon$
$$k_{\eta,m}(\hat{\mu}_{n_1}(\rho_1),\hat{\mu}_{n_2}(\rho_2)) - \delta  \leq \scalT{\Phi_{\eta,m}\left(\hat{\mu}_{n_1}(\rho_1)\right)}{\Phi_{\eta,m}\left(\hat{\mu}_{n_2}(\rho_2)\right)}\leq k_{\eta,m}(\hat{\mu}_{n_1}(\rho_1),\hat{\mu}_{n_2}(\rho_2)) +\delta ,$$
for $m'\geq \frac{1}{2\delta^2}\log(|\mathcal{A}|^{2s}/\epsilon)$.
Let $\Delta =   \frac{1}{n_1^2} \sum_{i,j=1}^{n_1}k_{\gamma,d}(a_i,a_j)+ \frac{1}{n^2_2}\sum_{i,j=1}^{n_2}k_{\gamma,d}(b_i,b_j)  -\frac{1}{n_1n_2}\sum_{i=1}^{n_1}\sum_{j=1}^{n_2}2k_{\gamma,d}(a_i,b_j)$ . By definition:
$$K(\rho_1,\rho_2)=\exp(-\frac{\eta}{2}\Delta)$$
\begin{align*}
k_{\eta,m}(\hat{\mu}_{n_1}(\rho_1),\hat{\mu}_{n_2}(\rho_2))&=\exp-\frac{\eta}{2} \left(\nor{\hat{\mu}_{n_1}(\rho_1)-\hat{\mu}_{n_2}(\rho_2)}^2_{2} \right)\\
&=\exp-\frac{\eta}{2} \left( \nor{\hat{\mu}_{n_1}(\rho_1)-\hat{\mu}_{n_2}(\rho_2)}^2_{2}- \Delta + \Delta \right)\\
&=K(\rho_1,\rho_2) \exp-\frac{\eta}{2} \left( \nor{\hat{\mu}_{n_1}(\rho_1)-\hat{\mu}_{n_2}(\rho_2)}^2_{2}- \Delta\right)
\end{align*}

Note that $m\geq \frac{1}{2\delta^2}\log(|\mathcal{A}|^2/\epsilon)$, we have with probability $1-\epsilon$,
 $$\left|\nor{\hat{\mu}_{n_1}(\rho_1)-\hat{\mu}_{n_2}(\rho_2)}^2_{2}- \Delta\right| \leq 3\delta $$
Hence with probability $1-\epsilon$ 
$$ \exp \frac{-3\eta \delta}{2} K(\rho_1,\rho_2) \leq k_{\eta,m}(\hat{\mu}_{n_1}(\rho_1),\hat{\mu}_{n_2}(\rho_2)) \leq \exp \frac{3\eta \delta}{2} K(\rho_1,\rho_2) $$
Hence for $m\geq  \frac{1}{2\delta^2}\log(|\mathcal{A}|^2/\epsilon)$ and $m'\geq  \frac{1}{2\delta^2}\log(|\mathcal{A}|^{2s}/\epsilon)$ we have with probability $1-2\epsilon
$:
$$\exp \frac{-3\eta \delta}{2} K(\rho_1,\rho_2) - \delta  \leq \scalT{\Phi_{\eta,m}\left(\hat{\mu}_{n_1}(\rho_1)\right)}{\Phi_{\eta,m}\left(\hat{\mu}_{n_2}(\rho_2)\right)}\leq \exp \frac{3\eta \delta}{2} K(\rho_1,\rho_2)  +\delta ,$$
Informally for 2 layers the error of estimation is multiplicative and additive, the inner word level dimension scales logarithmically with the vocabulary size $m =O(\log(|\mathcal{A}|))$, and the outer dimension scales linearly with the maximum sentence length $s$ an logarithmically with the vocabulary size  $m'=O(s\log(|\mathcal{A}|) )$.
\end{proof}
\subsection{HKSE as a Gaussian Embedding of Sentences}\label{app:Gauss2Vec}
 \cite{Word2Gauss} introduced the embedding of words to Gaussian distributions (Word2Gauss), we show that similarly HKSE(lin,rbf) for $k^w$ being linear and $k^{s}$ being an rbf (radial basis function) kernel defines an embedding of sentences to Gaussian distributions (Sent2Gauss). 
Given $\{a_1\dots a_n\}$, the vector embeddings of words in a sentence, we represent a sentence  as a  gaussian distribution $\rho$ : $ \rho \sim \mathcal{N}(\mu,\sigma^2 I_{d}), ~ \mu=\frac{1}{n}\sum_{i=1}^n a_i, \eta >0 .$
In order to compare two sentences we compare two distributions ${\rho_1}$ and ${\rho_2}$, using product kernel between distributions \cite{ProdKernel}:
$K(\rho_1,\rho_2)= \int \mathcal{N}(x; \mu_1, \sigma^2 I_{d})\mathcal{N}(x; \mu_2, \sigma^2 I_{d}) dx
=\mathcal{N}(0; \mu_1-\mu_2, 2\sigma^2 I_{d} )= (4\pi\sigma^2)^{-\frac{d}{2}}\exp \left(-\frac{1}{4 \sigma^2} \nor{\mu_1- \mu_2}^2\right),$ which corresponds to the kernel computed by HKSE(lin,rbf).
 HKSE(rbf,rbf) corresponds also to a universal kernel between distributions \cite{UnivKernel}.

\section{Data Splits}\label{app:splits}
For MSCOCO the training set contains $113,287$ images, along with $5$ captions each. Similarly to \cite {Klein}, we used the splits from  \cite{Karpathy}, and performed cross-validation on  a validation set of $5K$ images, and tested our models on  a test set of  $5$K images, as well as five 1K  splits of the $5 K$ test images  as in \cite{Klein}. We report for both tasks the recall rate at one result, five results, or ten first results (r@1,5,10), as well as the median rank of the first ground truth retrieval.\\
We follow the experimental protocol of
{\url{github.com/ryankiros/skip-thoughts/blob/master/eval_rank.py}}:
For $5K$ test images, we have $25 K$ query captions, the image search retrieval scoring is  based on the ground truth image of each caption. For image annotation, we have $5K$ query test images, to annotate among $25K$ captions, the scoring returns the caption that has the highest cosine within each  $5$ captions. A similar scoring is done for $1K$ tests.
We did not do any vocabulary pruning and kept all words that in vocabulary of word2vec we ended up with a vocbulary size 21975 words for MSCOCO.

The Flickr 8K training set contains $6K$ images  along with $5$ captions each, the validation and test set contain 1K images each. We use the splits as specified in \cite{Flickr8k}. The Flickr 30K \cite{Flickr30k} training set contains $ 25381$ images along with $5$ captions each, the validation and test sets contain $3K$ images each. We follow  \cite{Karpathy} and use 3 random splits of $1K$ images for test and validation and report average performance over three runs.
We did not do any vocabulary pruning and kept all words that in vocabulary of word2vec we ended up with a vocbulary size 16772 words for Flickr30 K and 7745 for Flickr 8k. 
\section{AWT-SVD CCA Regularization Paths }\label{app:RegPaths}

 \begin{figure*}[h!]
%\hspace*{-0.8in}
   \begin{subfigure}[t]{0.6\textwidth}        
        \includegraphics[height=2.2in]{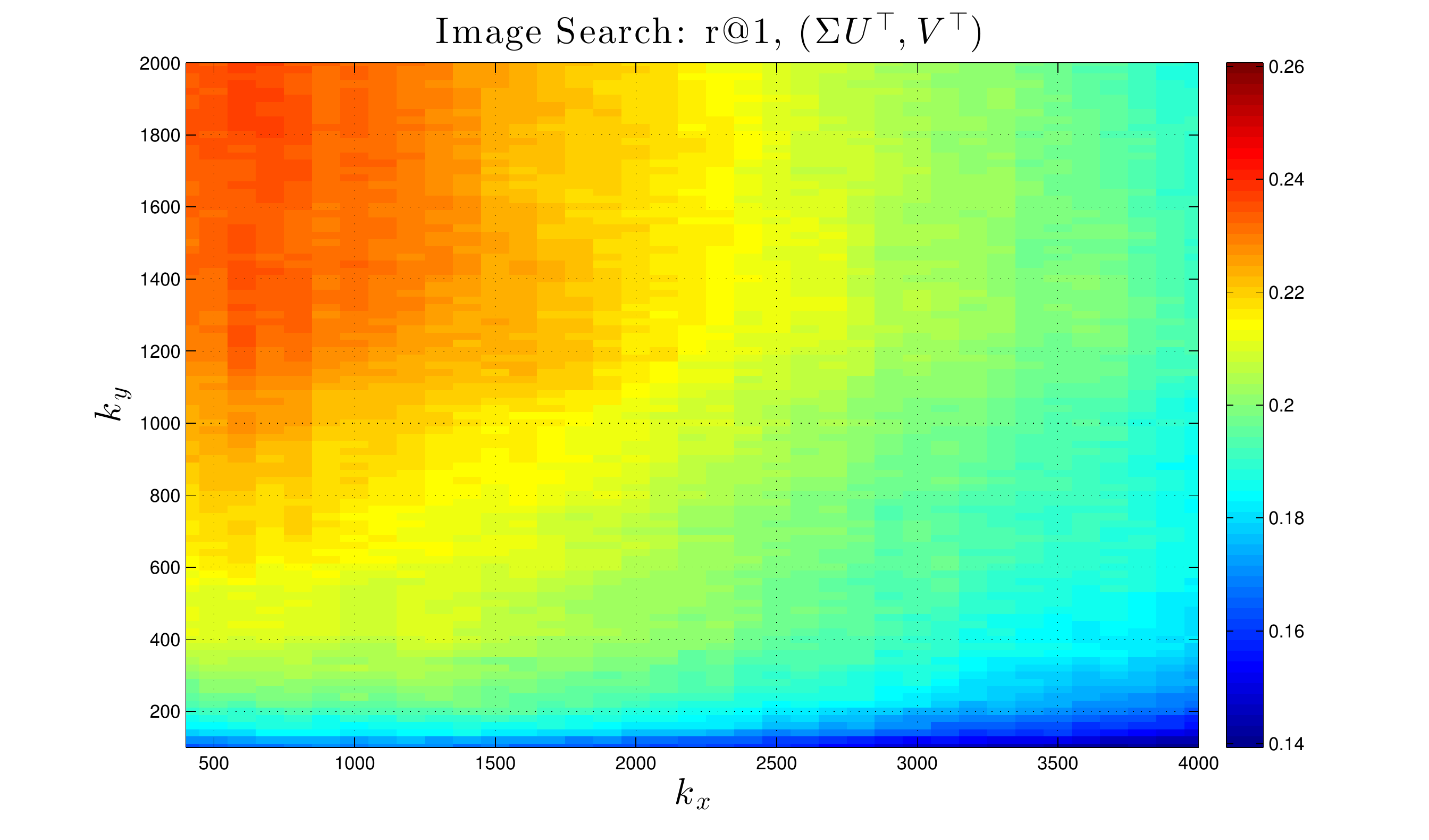}
         \caption{T-SVD Cross Validation.}
    \end{subfigure}
~
\hspace*{-0.6in}
    \begin{subfigure}[t]{0.5\textwidth} 
   
       \includegraphics[height=2.2in]{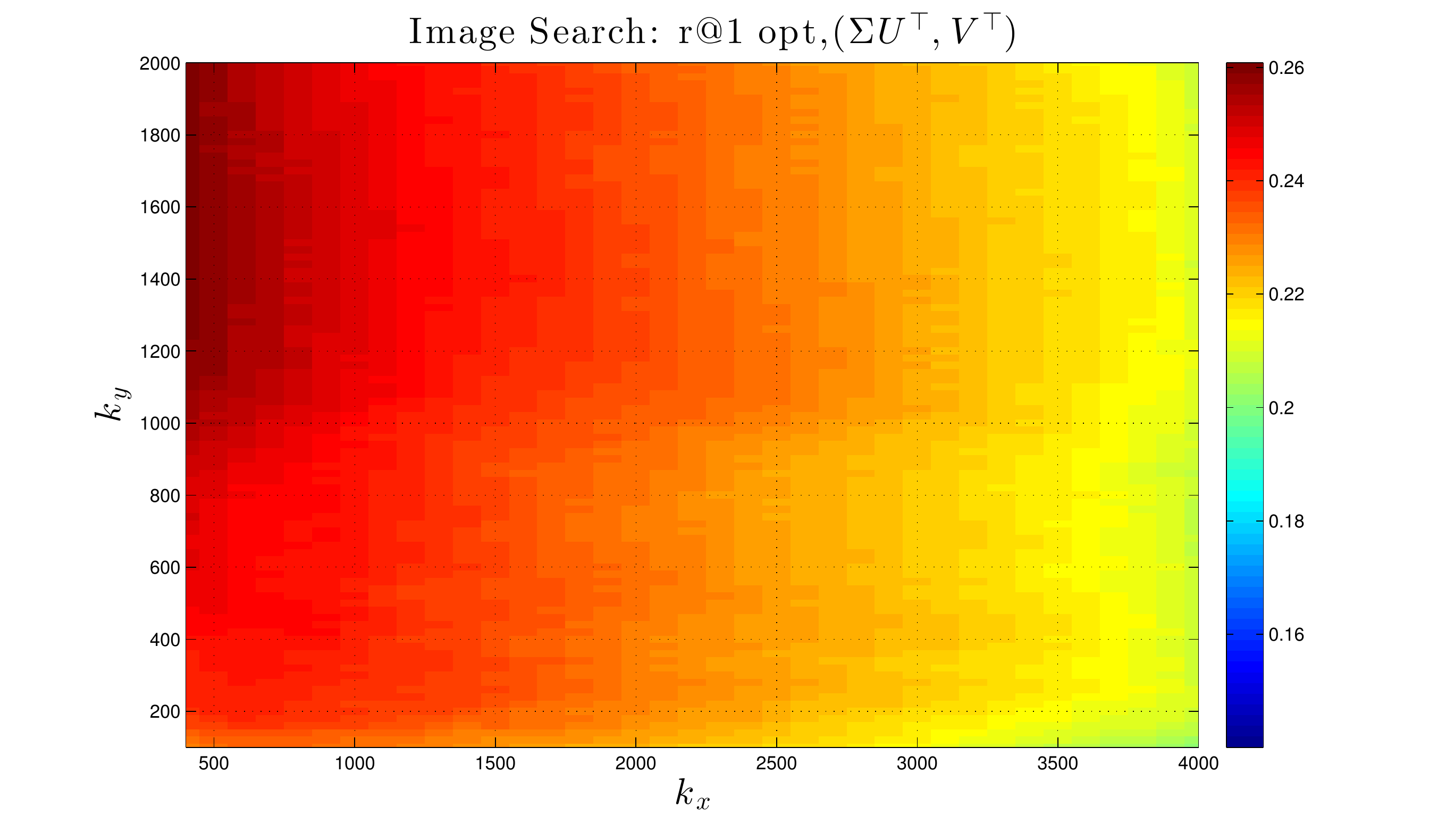}
     %  \centering
         \caption{Tikhonov Cross Validation \\$(\gamma_x=\sigma^2_{k_x}, \gamma_y=\sigma^2_{k_y})$.}
    \end{subfigure} 
         \caption{Regularization Path for T-SVD CCA , and Tikhonov CCA on bidirectional retrieval on Flickr30K with VGG features ($4096$ dimensions) for the image and  HKSE(rb,rbf) ($2000$ dimensions). Cross validation was performed on the validation set  on grid going from $400$ to $4000$ with step size of $100$ for $k_x$, and from $200$ to $2000$ with a step size of $20$ for $k_{y}$ . We report r@1 of the retrieved query over the validation set (Higher is better, in red). We see that T-SVD and Tikhonov select the same region of interest, justifying the T-SVD guided Tikhonov approach.}
  \end{figure*}

  \begin{figure}[ht]
%\hspace*{-0.8in}
  \begin{subfigure}[t]{0.6\textwidth}        
        \includegraphics[height=2.2in]{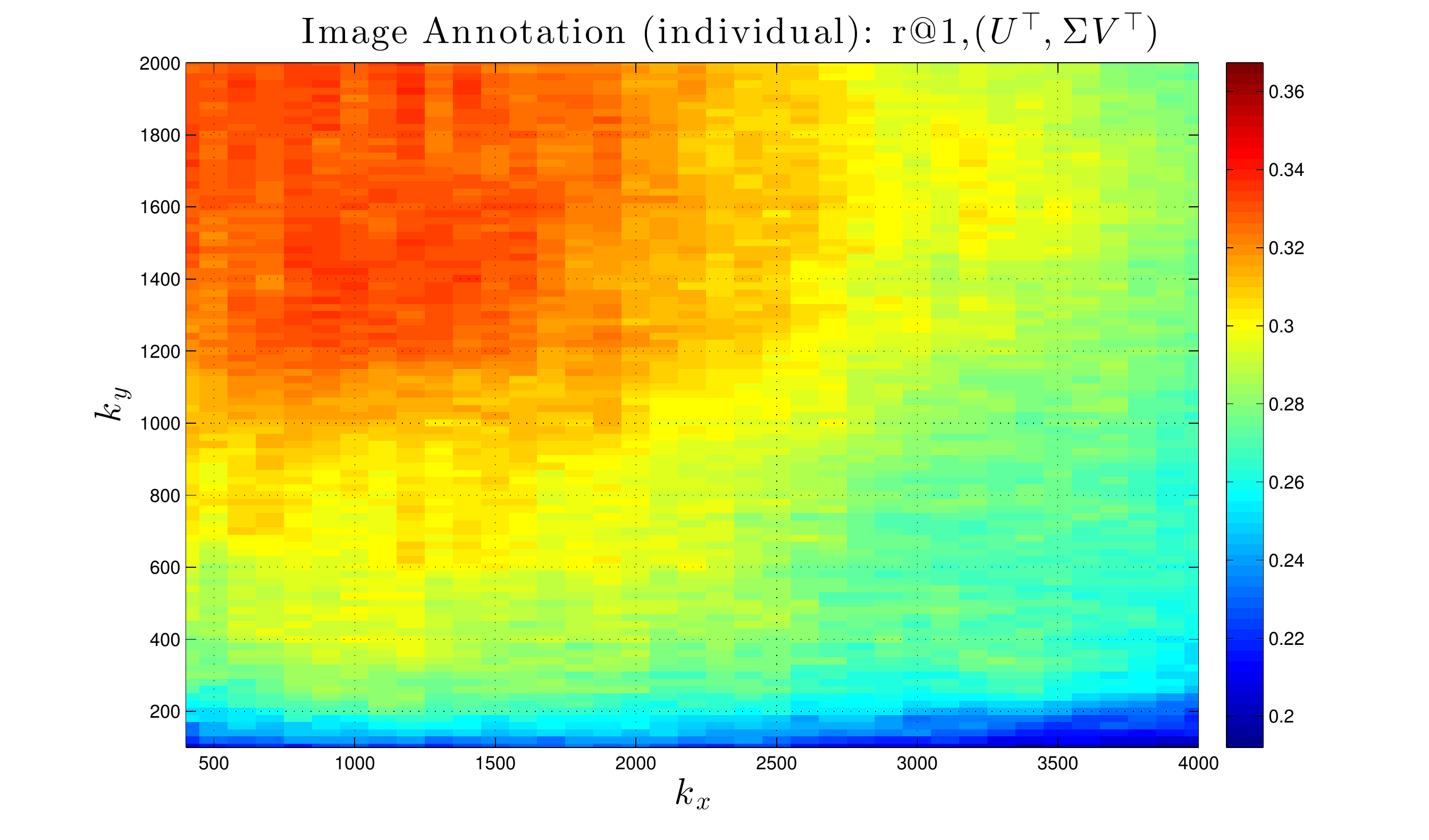}
         \caption{T-SVD Cross Validation.} %\\  \textcolor{red}{TO BE REPLACED HERE}}
    \end{subfigure}
~
\hspace*{-0.6in}
    \begin{subfigure}[t]{0.6\textwidth} 
            \includegraphics[height=2.2in]{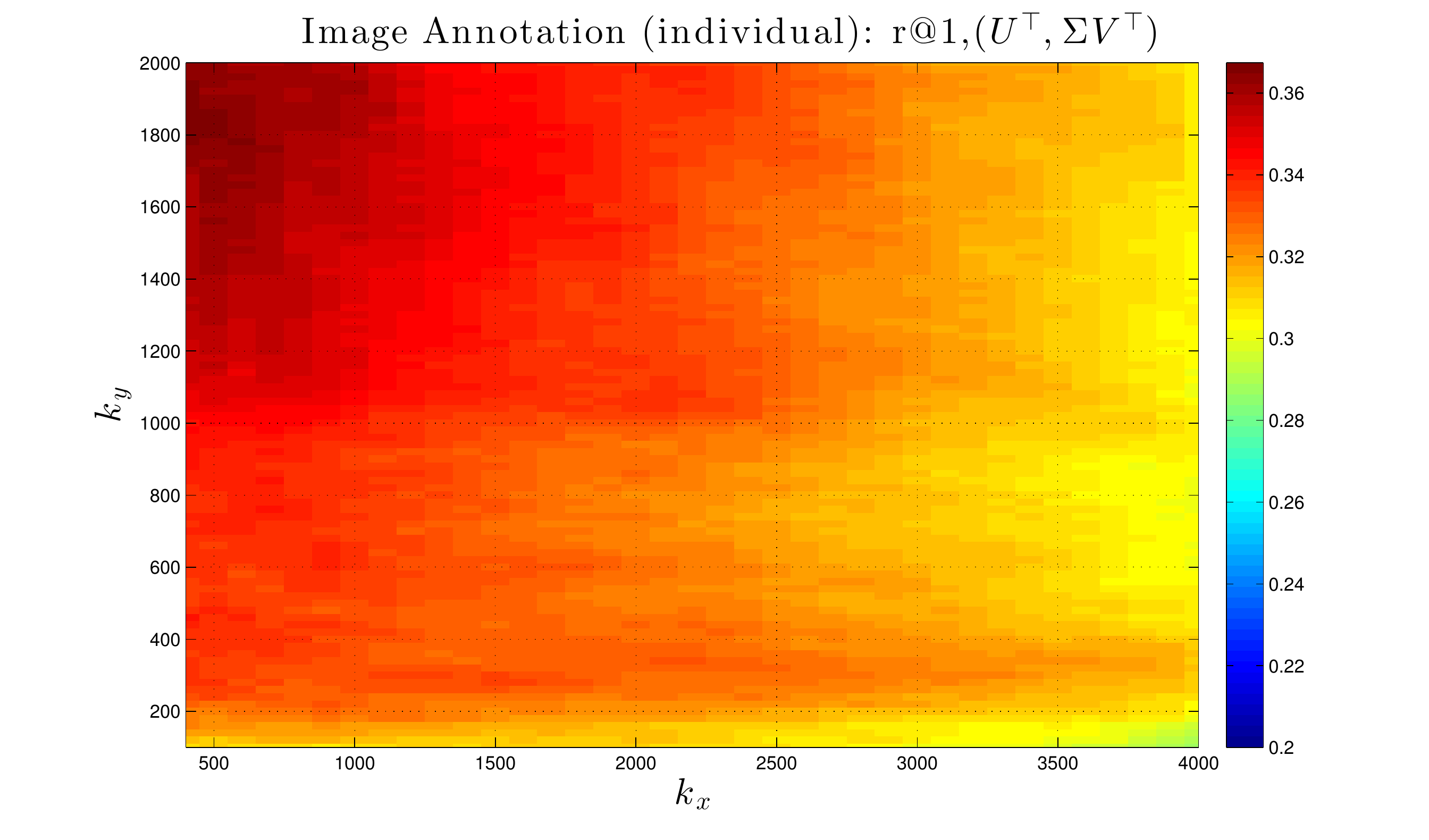}
             \caption{Tikhonov Cross Validation \\$(\gamma_x=\sigma^2_{k_x}, \gamma_y=\sigma^2_{k_y})$.}.% \\\textcolor{red}{TO BE REPLACED HERE}}
    \end{subfigure}

         \caption{  Regularization Path for T-SVD CCA , and Tikhonov CCA on bidirectional retrieval on Flickr30K with VGG features ($4096$ dimensions) for the image and  HKSE(rb,rbf) ($2000$ dimensions). Cross validation was performed on the validation set  on grid going from $400$ to $4000$ with step size of $100$ for $k_x$, and from $200$ to $2000$ with a step size of $20$ for $k_{y}$ . We report r@1 of the retrieved query over the validation set (Higher is better, in red). We see that T-SVD and Tikhonov select the same region of interest, justifying the T-SVD guided Tikhonov approach.}
  \end{figure} 

%Annotation: predicting the pack of 5 captions: 
 \begin{figure}[t!]
%\hspace*{-0.8in}
   \begin{subfigure}[t]{0.6\textwidth}        
        \includegraphics[height=2.2in]{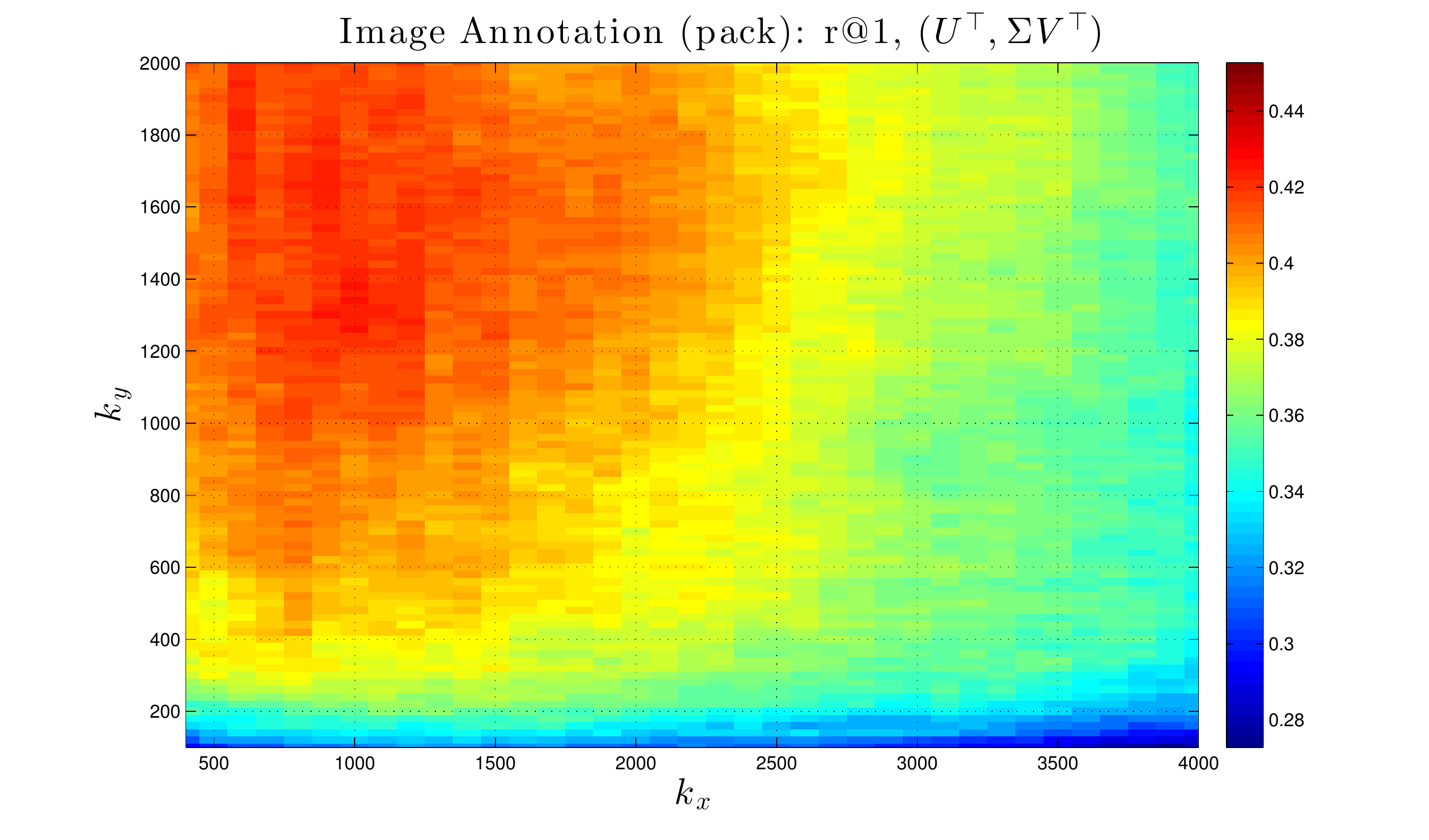}
         \caption{T-SVD Cross Validation}
    \end{subfigure}
~    \hspace*{-0.6in}
    \begin{subfigure}[t]{0.6\textwidth} 

            \includegraphics[height=2.2in]{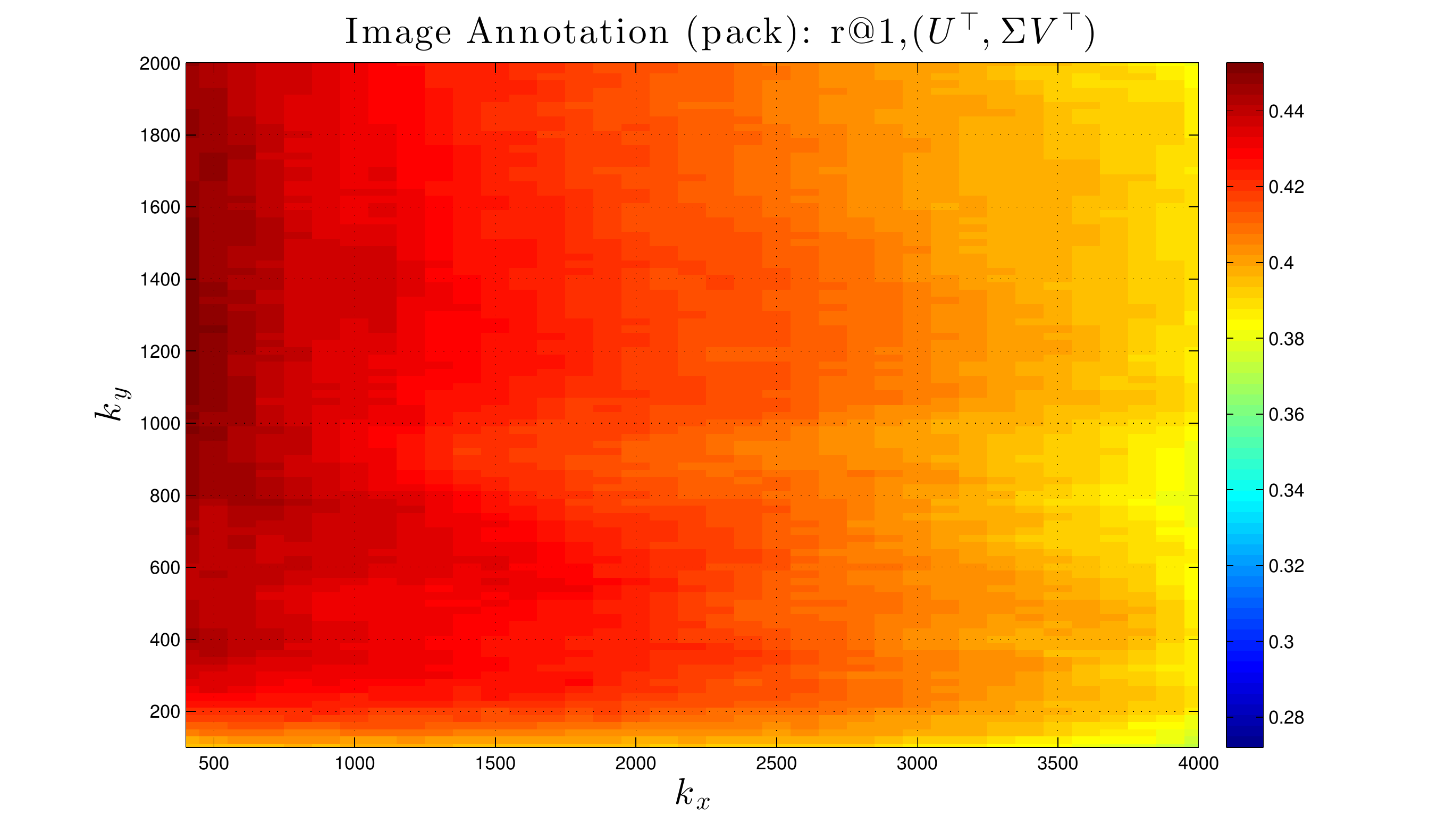}
             \caption{Tikhonov Cross Validation \\$(\gamma_x=\sigma^2_{k_x}, \gamma_y=\sigma^2_{k_y})$.}
    \end{subfigure}
         \caption{   Regularization Path for T-SVD CCA , and Tikhonov CCA on bidirectional retrieval on Flickr30K with VGG features ($4096$ dimensions) for the image and  HKSE(rb,rbf) ($2000$ dimensions). Cross validation was performed on the validation set  on grid going from $400$ to $4000$ with step size of $100$ for $k_x$, and from $200$ to $2000$ with a step size of $20$ for $k_{y}$ . We report r@1 of the retrieved query over the validation set (Higher is better, in red). We see that T-SVD and Tikhonov select the same region of interest, justifying the T-SVD guided Tikhonov approach.}
 \label{fig:packof5}
  \end{figure} 
\section{Examples of Annotation and Search with AW-CCA with HKSE(rbf,rbf) and VGG }
  \begin{figure}[ht]
\hspace*{0.3in}
  \begin{subfigure}[t]{\textwidth}     
       \centering   
        \includegraphics[scale=0.5]{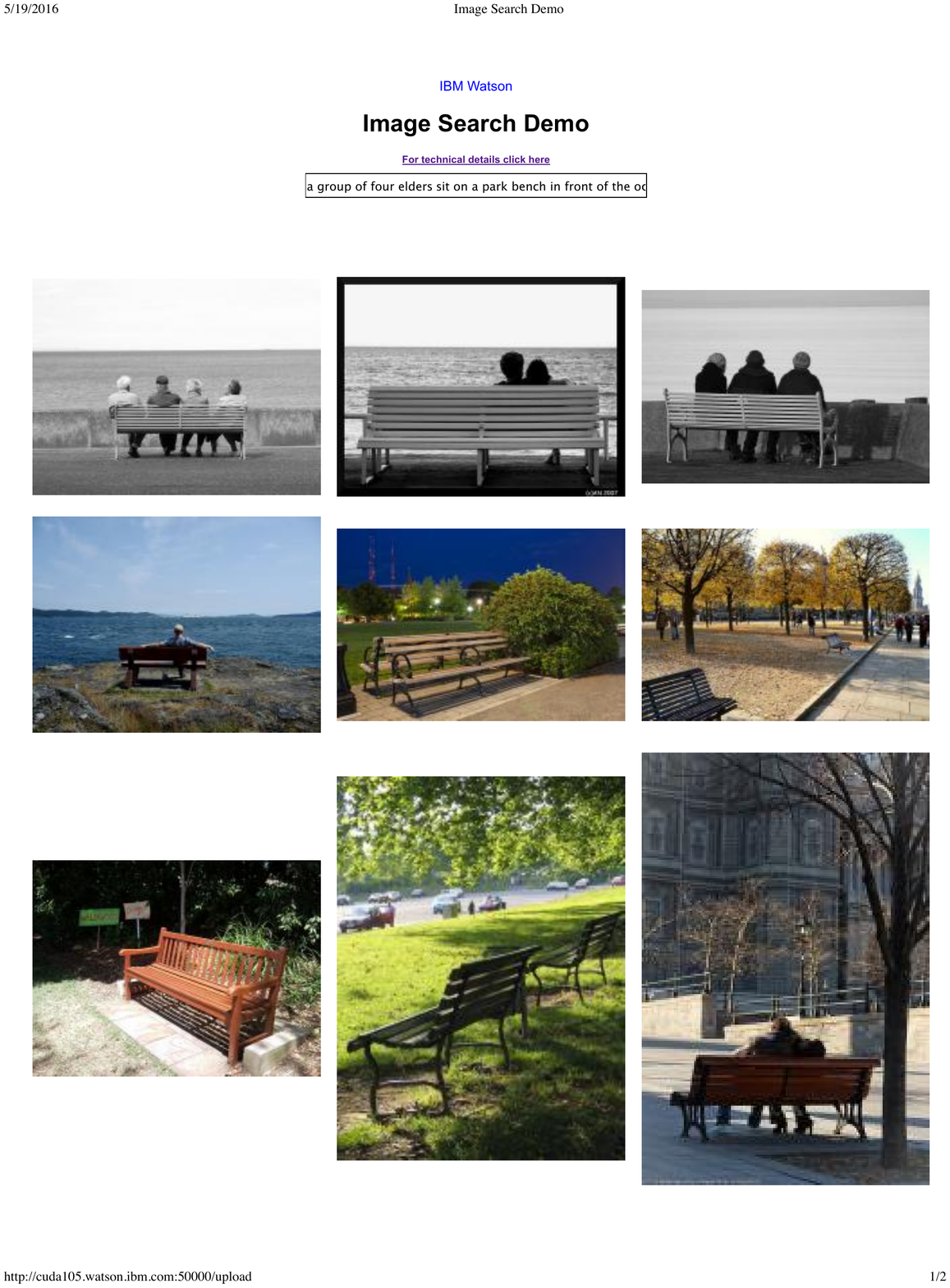}
        % \caption{T-SVD Cross Validation.} %\\  \textcolor{red}{TO BE REPLACED HERE}}
    \end{subfigure}
~
\hspace*{0.3in}
    \begin{subfigure}[t]{\textwidth} 
       \centering
            \includegraphics[scale=0.5]{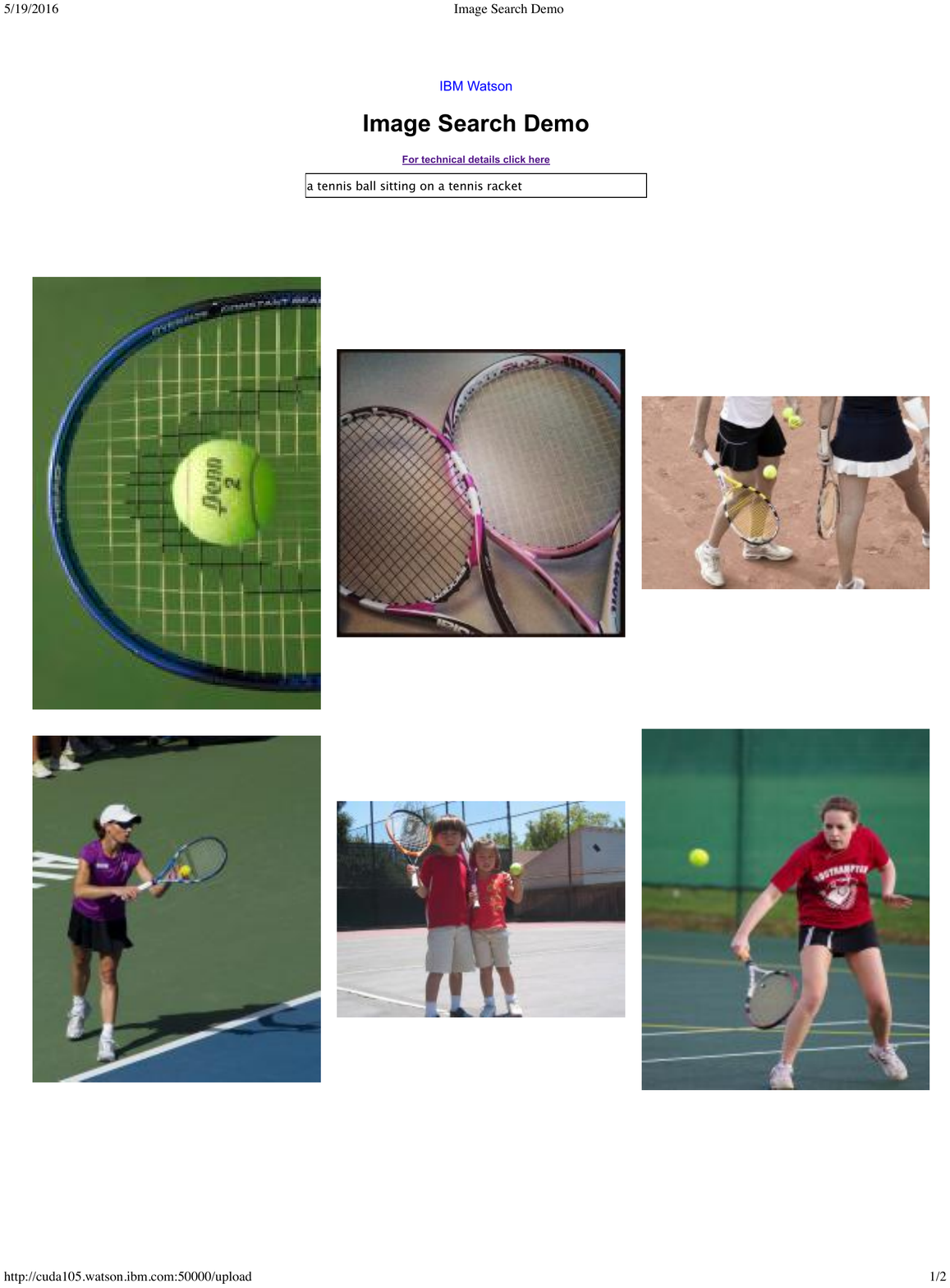}
         
           %  \caption{Tikhonov Cross Validation \\$(\gamma_x=\sigma^2_{k_x}, \gamma_y=\sigma^2_{k_y})$.}.% \\\textcolor{red}{TO BE REPLACED HERE}}
    \end{subfigure}

         \caption{ Image Search Results for two random text queries from COCO test set.}
  \end{figure} 
    \begin{figure}[t!]
\hspace*{0.3in}
  \begin{subfigure}[t]{\textwidth}     
       \centering   
        \includegraphics[scale=0.5]{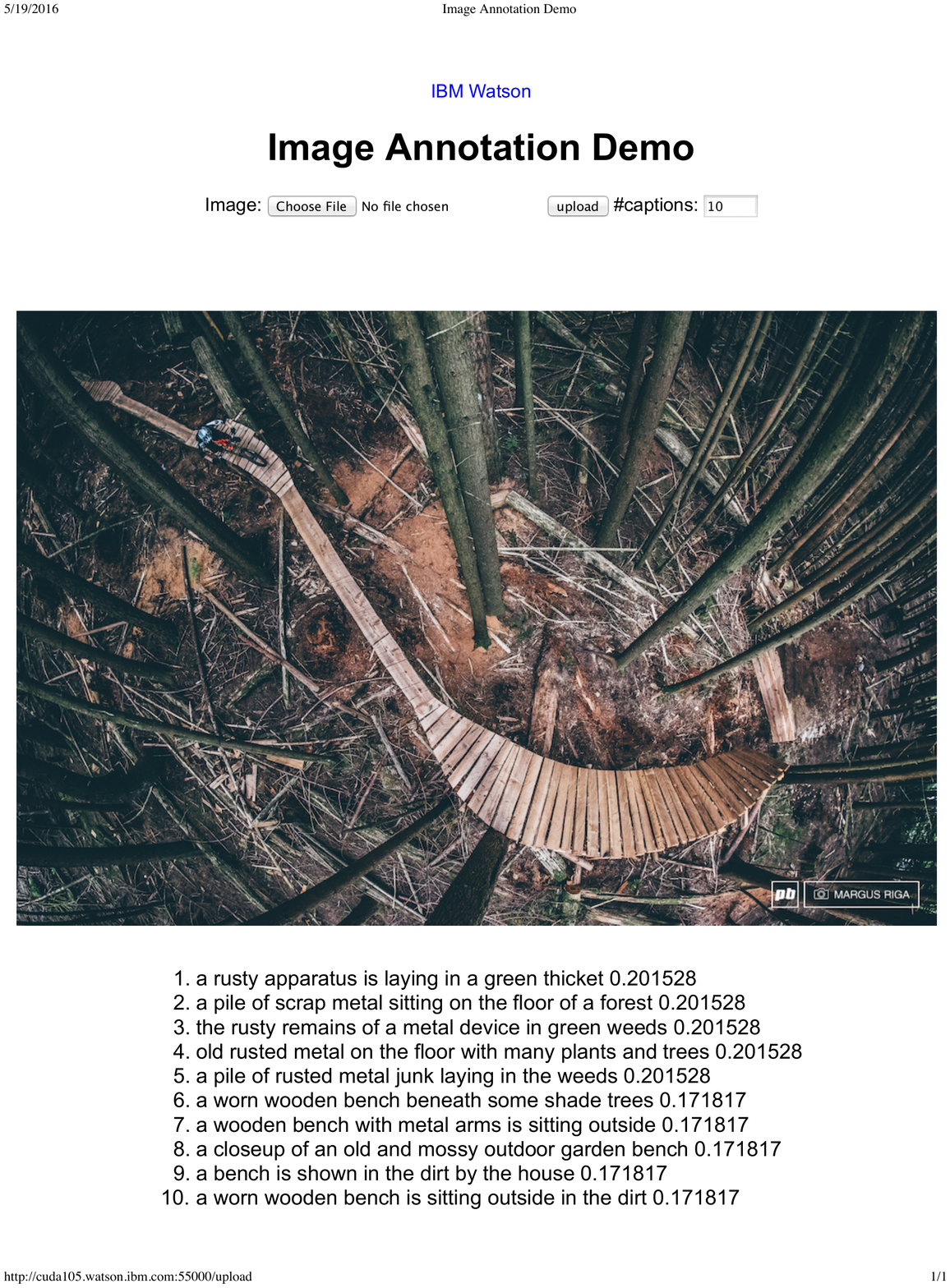}
        % \caption{T-SVD Cross Validation.} %\\  \textcolor{red}{TO BE REPLACED HERE}}
    \end{subfigure}
~
\hspace*{0.3in}
    \begin{subfigure}[t]{\textwidth} 
       \centering
            \includegraphics[scale=0.5]{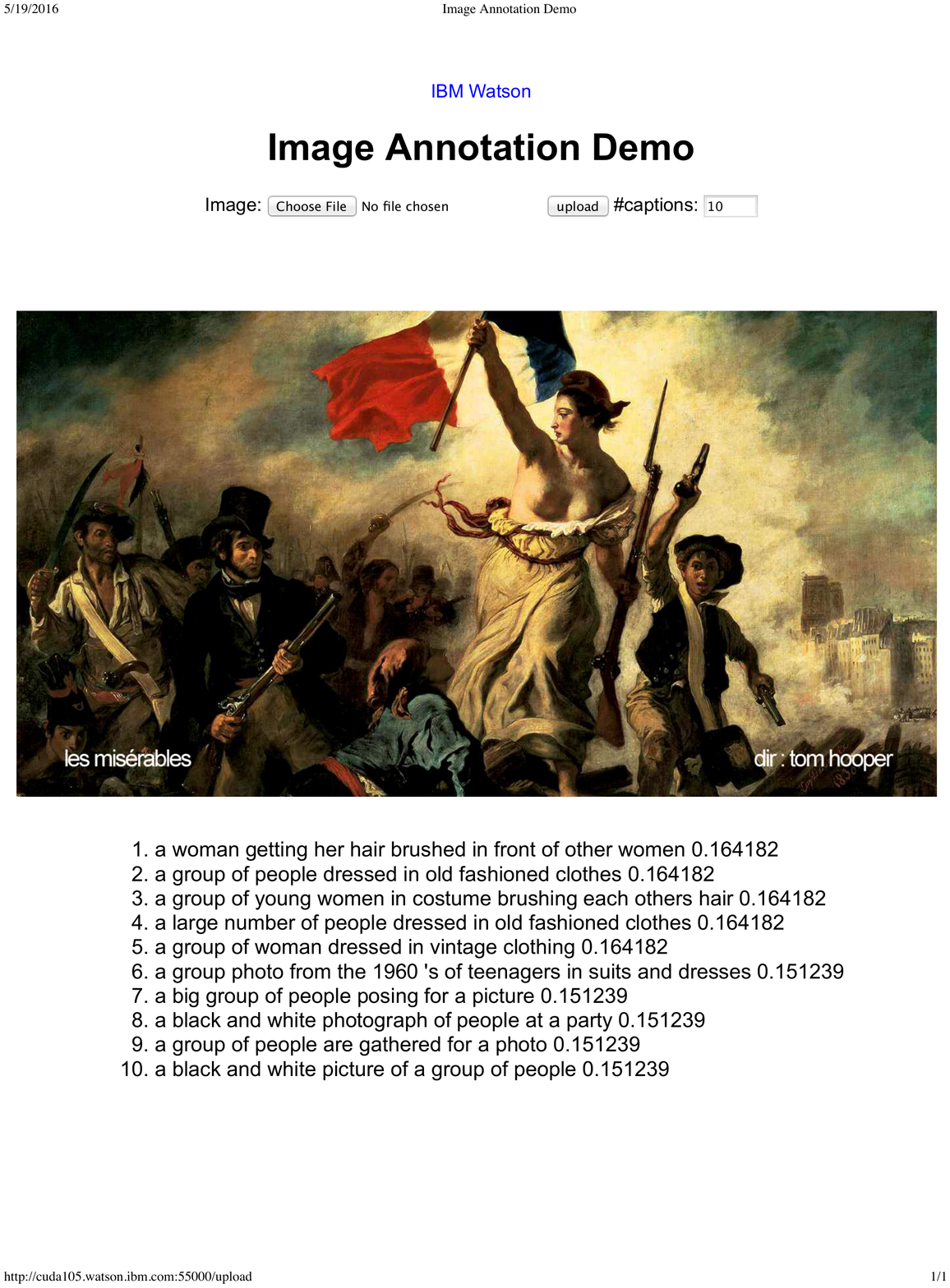}
         
           %  \caption{Tikhonov Cross Validation \\$(\gamma_x=\sigma^2_{k_x}, \gamma_y=\sigma^2_{k_y})$.}.% \\\textcolor{red}{TO BE REPLACED HERE}}
    \end{subfigure}

         \caption{ Image Annotation results for two images outside the COCO set using Ann pack.}
       
  \end{figure} 
  
\end{appendix}

\end{document}